\newcommand{\alg}{\text{Alg}}
\newcommand{\alge}{{\text{Alg}^E}}
\newcommand{\algc}{{\text{Alg}^C}}
\newcommand{\algtheta}{{\text{Alg}_{\hat\btheta}}}
\newcommand{\npre}{{n_{\text{pre}}}}
\newcommand{\ereal}{{\epsilon_{\text{real}}}}
\newcommand{\epre}{{\epsilon_{\text{pretrain}}}}
\newcommand{\decoder}{{\mathtt{Dec}}}
\newcommand{\disth}{{\text{D}_{\text{H}}}}
\newcommand{\disttv}{{\text{D}_{\text{TV}}}}
\newcommand{\ps}{{\hat \PP^s}}
\newcommand{\phat}{{\hat \PP}}
\newcommand{\event}{{\cE}}
\newcommand{\name}{{\textit{\textsc{CoPS}}\xspace}}
\newcommand{\offlinename}{{\textsc{OfflineCoPS}\xspace}}
\newcommand{\onlinename}{{\textsc{OnlineCoPS}\xspace}}
\newcommand{\llm}{{\text{LLM}}}
\newcommand*{\rom}[1]{\expandafter\@slowromancap\romannumeral #1@}
\title{\huge CoPS: Empowering LLM Agents with Provable Cross-Task Experience Sharing}
\author
{
	Chen Yang\thanks{Equal Contribution} \thanks{Department of Computer Science, Indiana University Bloomington, IN 47408, USA; e-mail: {\tt cya2@iu.edu}}~~
    Chenyang Zhao$^*$\thanks{Department of Computer Science, University of California, Los Angeles, CA 90095, USA; e-mail: {\tt  zhaochenyang@cs.ucla.edu}}~~~
    Quanquan Gu\thanks{Co-Corresponding Authors} \thanks{Department of Computer Science, University of California, Los Angeles, CA 90095, USA; e-mail: {\tt  qgu@cs.ucla.edu}}~~~
	Dongruo Zhou$^\S$\thanks{Department of Computer Science, Indiana University Bloomington, IN 47408, USA; e-mail: {\tt dz13@iu.edu}} 
}
\begin{document}
\date{}
\maketitle

\begin{abstract} Sequential reasoning in agent systems has been significantly advanced by large language models (LLMs), yet existing approaches face limitations. Reflection-driven reasoning relies solely on knowledge in pretrained models, limiting performance in novel scenarios, while experience-assisted reasoning often depends on external experiences and lacks clear principles for selecting representative experiences. We address these limitations by proposing \name{} (\textbf{\underline{C}}r\textbf{\underline{o}}ss-Task Ex\textbf{\underline{p}}erience \textbf{\underline{S}}haring), a generalizable algorithm that enhances sequential reasoning by cross-task experience sharing and selection. In detail, \name{} leverages agents' experiences on previous tasks, selecting distribution-matched experiences via a provable pessimism-based strategy to maximize utility while minimizing risks from distribution shifts. Extensive experimental results on benchmarks like Alfworld, Webshop, and HotPotQA demonstrate that \name{} consistently outperforms state-of-the-art baselines, with superior sample efficiency suitable for resource-constrained scenarios. Theoretically, we show that the performance of our algorithm depends on both the quality of the pretrained LLM and the matching between the agent's task-dependent trial distribution and that generated by the LLM. Our work bridges the gap between existing sequential reasoning paradigms and validates the effectiveness of leveraging cross-task experiences, shedding light on the potential to improve agents' generalization and adaptability across diverse tasks. Our codes are available at \href{https://github.com/uclaml/COPS}{https://github.com/uclaml/COPS}.
\end{abstract}

\section{Introduction}
Burgeoning agent systems driven by advanced large language models (LLMs, \citep{DBLP:conf/naacl/DevlinCLT19,DBLP:conf/nips/BrownMRSKDNSSAA20,DBLP:journals/corr/abs-2303-08774,hu2024minicpm}) have demonstrated remarkable capabilities in solving complex tasks through sequential reasoning \citep{qin2024large, hao2023reasoning, huang2024understanding, chen2024internet, chen2024agentverse, li2023camel}. These agent systems employ two typical sequential reasoning paradigms: reflection-driven reasoning and experience-assisted reasoning. Reflection-driven reasoning leverages a model's internal capabilities through methods such as reflection \citep{shinn2024reflexion}, long-term rollouts \citep{zhou2023language}, or chain-of-thought (CoT) reasoning \citep{wei2022chain}. While this approach capitalizes on the knowledge within the pre-trained model, it faces notable limitations. Specifically, relying solely on existing knowledge in the pre-trained model to generate rationales restricts the model's performance when encountering novel scenarios. Moreover, there is an increased risk of hallucinations, where internal reasoning may lead to plausible but incorrect responses \citep{huang2023survey}. These challenges highlight the need for integrating external experiences to enhance the agent's sequential reasoning capabilities.

In contrast, experience-assisted sequential reasoning utilizes retrieval-based methods that enable the agent to interact with a memory bank of experiences, allowing the model to overcome knowledge cutoffs, personalize responses, and reduce hallucinations. However, these experiences are often manually curated or sourced from expert models \citep{raparthy2023generalization}, which is resource-intensive and poses scalability issues. Additionally, experience-assisted reasoning often lacks clear principles for selecting representative examples \citep{kagaya2024rap}, potentially underutilizing the value of past experiences. These limitations bring us to a critical research question:

\begin{figure}[t!]
    \centering
        \includegraphics[width=\textwidth]{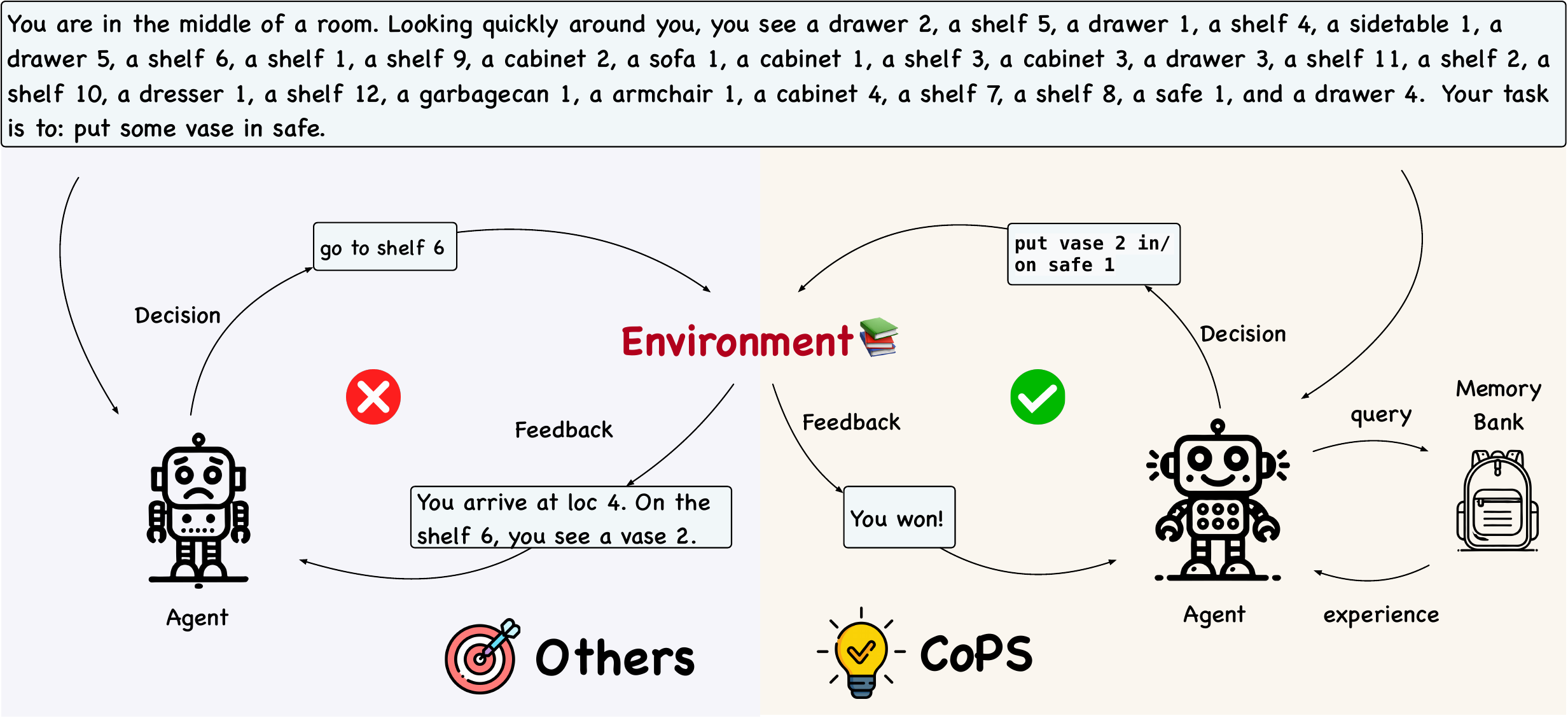} 
    \vskip\baselineskip
    \vspace{1em}  
    \caption{A brief illustration of \name, which fully leverages agents' cross-task experiences to enhance sequential reasoning by sharing and selecting distribution-matched experiences from previous task trajectories.}
\end{figure}

\begin{center} \textit{Can agent systems enhance sequential reasoning by sharing and selecting cross-task experiences?} \end{center}

To address this question, we propose \name{} (\textbf{\underline{C}}r\textbf{\underline{o}}ss-Task Ex\textbf{\underline{p}}erience \textbf{\underline{S}}haring), a theoretically grounded algorithm that empowers agent systems through cross-task experience sharing and selection. \name{} demonstrates its generalizability by working effectively in both settings: utilizing fully external experiences in the \emph{offline} setting and leveraging completely self-derived experiences in the \emph{online} setting. By utilizing representative cross-task experiences, \name{} enables agents to improve performance on new, complex sequential reasoning tasks. Our key contributions are summarized as follows:

\begin{itemize}[leftmargin=*] 

\item We introduce \name{}, a method that fully leverages agents' cross-task experiences to enhance sequential reasoning by selecting distribution-matched experiences from previous task trajectories. Central to our approach is a theoretically grounded experience selection strategy based on the pessimism principle, which aims to maximize the utility of successful, representative experiences while minimizing risks associated with distribution shifts from out-of-distribution samples. Notably, \name{} is agnostic to the agent's base model, task type, experience sources, and implementation framework, making it easy-to-use and generalizable across various settings.

\item Experimentally, we validate \name{} on key benchmarks such as Alfworld \citep{shridhar2020alfworld}, Webshop \citep{yao2022webshop}, and HotPotQA \citep{yang2018hotpotqa}. \name{} consistently outperforms state-of-the-art experience-assisted reasoning approaches like RAP \citep{kagaya2024rap} and reflection-driven reasoning methods, like Reflexion \citep{shinn2024reflexion} and LATS \citep{zhou2023language}. Moreover, \name{} demonstrates superior sample efficiency compared to resource-intensive methods like LATS, making it highly suitable for resource-constrained scenarios. These results showcase \name{}'s effectiveness in practical applications.

\item Theoretically, we show that in both offline and online settings, the performance of our pessimism-based algorithm depends on both the quality of the pre-trained LLM and the matching between the cross-task experience distribution decided by the trials selected by the agent, and a task-dependent experience distribution denoted by the LLM. Our findings shed light on general strategies for designing efficient experience sharing and selction algorithms and offer a comprehensive understanding of \name{}'s effectiveness across different scenarios.

\end{itemize}

\noindent\textbf{Notations}
We denote by $[n]$ the set $\{1,\dots, n\}$. For two positive sequences $\{a_n\}$ and $\{b_n\}$ with $n=1,2,\dots$, we write $a_n=O(b_n)$ if there exists an absolute constant $C>0$ such that $a_n\leq Cb_n$ holds for all $n\ge 1$ and write $a_n=\Omega(b_n)$ if there exists an absolute constant $C>0$ such that $a_n\geq Cb_n$ holds for all $n\ge 1$. We use $\tilde O(\cdot)$ to further hide the polylogarithmic factors. We use $(x_i)_{i=1}^n$ to denote sequence $(x_1, ..., x_n)$, and we use $\{x_i\}_{i=1}^n$ to denote the set $\{x_1, ...,x_n\}$.
We use $\disth(p,q) = \sqrt{1/2\cdot \int (\sqrt{p} - \sqrt{q})^2}$ to denote the Hellinger distance. We use $\disttv(p,q) = 1/2\cdot\int|p-q|$ to denote the Total variation distance. We use $\chi^2(p,q) = \int p^2/q -1$ to denote the chi-square distance. For two sentences $a$ and $b$, we use $a|b$ to denote the sentence formed by concatenating $a$ and $b$. We use $\Delta(\cD)$ to denote the set of distributions over $\cD$.

\section{Related Work}

\subsection{LLM-powered Agents}
In recent years, there has been a significant surge in research focused on LLM-powered agents \citep{chen2024internet, chen2024agentverse, chan2023chateval}. React \citep{yao2022react} laid the foundation for much of the subsequent work on LLM agents, particularly those based on in-context learning (ICL). The most relevant studies to \name{} include \citet{shinn2024reflexion, kagaya2024rap, zhou2023language, raparthy2023generalization}. In \citet{kagaya2024rap}, a retrieval process for selecting in-context demonstrations was proposed. However, their approach depends on frequent embedding queries during the planning stage, leading to inefficiency issues even in smaller LLM settings. Additionally, RAP manually splits the agent's planning trajectory into multiple stages for each trial, with benchmark-specific tailoring, significantly increases implementation complexity and raises scalability concerns. \citet{zhou2023language} introduced a Tree-of-Thought (ToT) approach \citep{yao2024tree}, incorporating backpropagation and a valuation process. However, their approach demonstrated poor sample efficiency, making it less suited for real-world agent settings where opportunities for trial and error are limited. Similarly, \citet{liu2023reason} integrated value-based search into a theoretical framework, but faced similar challenges with sample efficiency. \citet{feng2024agile} explored fine-tuning for specific LLM agent tasks, achieving good performance but with high computational costs. Lastly, \citet{raparthy2023generalization} utilized high-quality experiences as ICL demonstrations for sequential reasoning. Although achieving remarkable performance, these experiences are introduced from external RL systems, which is resource-intensive and poses scalability issues.

\subsection{In-context Demonstrations Selection}
The selection of demonstrations for ICL has been widely studied. \citet{wang2024large} approached in-context demonstration selection from a Bayesian perspective, explicitly constructing a latent variable for the selection process. However, their analysis did not account for the pre-trained knowledge distribution, and their results were primarily empirical. \citet{yan2023understanding} investigated the impact of repetition in in-context demonstrations, conducting controlled experiments to assess how repetitions in pre-trained knowledge influence results. \citet{scarlatos2023reticl} developed a reinforcement learning framework to select in-context examples, while \citet{voronov2024mind} examined the impact of prompt formatting on in-context learning performance. Additionally, \citet{shum2023automatic} introduced an automatic CoT augmentation and selection method for ICL example datasets. \citet{hu2024unveiling} analyzed the scaling of in-context demonstrations from a theoretical standpoint, deriving general statistical bounds while accounting for pre-training errors. However, their focus was primarily on CoT in general ICL settings, not on the specific challenges faced by LLM agents interacting with environments and requiring feedback for optimization.

\subsection{Theory of Agents}
Several works have advanced the theoretical understanding of LLM agents. 
\citet{he2024words} explored the statistical theory of LLM agents through the lens of Bayesian aggregated imitation learning. \citet{lin2023transformers} provided a theoretical analysis of transformers within the context of in-context reinforcement learning. \citet{wang2024understanding} examined the training and generalization of transformers for sequential reasoning, drawing parallels between transformer behavior and online learning algorithms. \citet{sumers2023cognitive} offered a cognitive perspective on LLM agents, while \citet{park2024llm} investigated the regret of LLM agents in sequential reasoning tasks, contributing both theoretical and empirical insights that inform \name{}'s development.

\section{Methodology}\label{sec:propmethod}

\subsection{Preliminary}

We consider a sequential decision-making scenario, consisting of a task space $\mathtt{M}$, a state space $\mathtt{S}$, and an action space $\mathtt{A}$. The state $s \in \mathtt{S}$ is defined as a descriptive sentence representing the history of the current task. For example: “You are in the middle of a room. Please find a path to reach the apple.” The action $a \in \mathtt{A}$ is a solution to the task, such as: “Move right. The apple is on the table.” The agent interacts with the environment through trials. At the beginning of each trial, a task $\Mb$ is randomly drawn from the task space, $\Mb \sim \mathbb{P}^{\mathtt{M}}$. The agent then observes an initial state $s_1$, sampled from the initial state distribution, $s_1 \sim \mathbb{P}^{\Mb}_0$. At each step $h$, the agent makes a decision $a_h$ based on the current state $s_h$, and the next state is updated as $s_{h+1} = s_h | a_h$. The agent either successfully completes the task or continues generating actions until reaching the maximum number of interactions $H$ between the agent and the environment. We define an \emph{experience} $\tau$ as a complete trial, i.e., $\tau = s_h$, where $h \leq H$ is the final step of the current trial. The reward $r(s_h)$ denotes how effectively the experience solves the task, with $0 \leq r(s_h) \leq 1$.

In this work, we assume access to a large language model (LLM) to assist in decision-making. We represent the LLM as $\llm(a|\cdot)$, a conditional distribution of actions given the input sequence.

\begin{algorithm}[t!]
\caption{$\name$: \textbf{\underline{C}}r\textbf{\underline{o}}ss-Task Ex\textbf{\underline{p}}erience \textbf{\underline{S}}haring}
\label{alg:proposed_method}
\begin{algorithmic}[1]
\REQUIRE Language model $\llm(\cdot|\cdot)$, memory bank $\cD = \{\tau_1,\dots, \tau_n\}$, decoder $\decoder$, distance metric $d$, memory size $k$, maximum sequence length $H$.
\STATE Receive initial state $s_1$, receive state-sampled experience $\tau^{s_1}$ through decoder $\tau^{s_1}\sim \decoder(\cdot|s_1)$.\label{line:decoder}
\STATE Set the probability $\hat p \in \Delta(\cD)$ as in \eqref{transform}, which approximately maximizes the following:
\begin{align}
    \hat p = \argmax_{p\in \Delta(\cD)} \EE_{\tau\sim p} [r(\tau) - d(\tau, \tau^{s_1})].\label{help:122}
\end{align}
\STATE Repeatedly retrieve trials $\tau^1, \dots, \tau^k \sim \hat p$.  \label{line:sample}
\STATE Concate $\tau^1, \dots, \tau^k$ into one trajectory $\cT = \tau^1 |\dots |\tau^k$, set $h \leftarrow 1$.
\WHILE{NOT SUCCESS and $h<H$}
\STATE Obtain action $a_h\sim \llm(\cdot|\cT, s_h)$, set $s_{h+1} \leftarrow s_h |a_h$, $h\leftarrow h+1$.
\ENDWHILE
\end{algorithmic}
\end{algorithm}

\subsection{Proposed Method}

We introduce our proposed method, \name{}, based on distribution matching. \name{} operates on a trial-wise basis, making it suitable for both the \emph{offline setting}, where the agent has access to an external static dataset containing experiences, and the \emph{online setting}, where the agent gathers experiences through interactions with the environment. Suppose our agent is at the start of a trial with an initial state $s_1 \sim \mathbb{P}_0^{\Mb}$. We introduce the key components of \name{} as follows.

\noindent \textbf{Memory Bank} The agent has access to a memory bank $\mathcal{D}$ containing experiences, either from a pre-collected dataset (offline) or from previous experiences (online). We do not impose restrictions on $\mathcal{D}$, meaning that experiences in $\mathcal{D}$ exhibit great diversity. Specifically, an experience $\tau \in \mathcal{D}$ may correspond to different tasks $\Mb$ or to varying solution strategies for the same task. Our goal is to develop a strategy for retrieving experiences from $\mathcal{D}$ that assist in decision-making for the current task.

\noindent \textbf{Cross-Task Experience Sharing} \name{} utilizes an external module called the \emph{decoder}, denoted as $\decoder$ in Line \ref{line:decoder}. In general, the decoder outputs a task-dependent distribution of experiences conditioned on the initial state $s_1$, reflecting how the LLM would solve the task $\Mb$ associated with $s_1$ without explicit instructions. With the decoder's help, the agent's goal is to find a probability distribution $\hat{p}$ over all experiences in $\mathcal{D}$ that satisfies:
\begin{align}
    \hat p = \argmax_{p\in \Delta(\cD)} \EE_{\tau\sim p} [r(\tau)] - d(p, \decoder(\cdot|s_1)),\label{help:123}
\end{align}
where $d$ is a metric over distributions. Intuitively, \eqref{help:123} is similar to the \emph{pessimism principle}, commonly used in offline RL literature \citep{jin2021pessimism}. The goal of $\hat{p}$ is to maximize the expected reward while keeping the distribution close to the one decoded by $\decoder$. Importantly, $\hat{p}$ supports the cross-task setting, as it does not restrict its support to experiences from the same task as $s_1$. For a given in-context memory size $k$, \name{} repeatedly samples experiences $\tau^1, \dots, \tau^k$ from $\hat{p}$, as shown in Line \ref{line:sample}.

\noindent \textbf{Execution Planning} Let $\mathcal{T} = \tau^1 | \dots | \tau^k$ represent the \emph{experience collection} containing $\tau^1, \dots, \tau^k$. Starting from the initial state $s_1$, the agent executes actions step-by-step, where each action $a_h$ is drawn from the LLM’s distribution, conditioned on both the experience collection and the current state:
\begin{equation} 
a_h \sim \llm(\cdot|\cT, s_h). \notag
\end{equation} 
In the online setting, after completing a trial, the agent updates the memory bank $\mathcal{D}$ by adding the new experience for future use.

\noindent\textbf{Implementation Details} Here we discuss several implementation details for \name{}. \emph{First}, in practice, directly computing the distance $d(p, \decoder(\cdot|s_1))$ between distributions in \eqref{help:123} is computationally intractable. Therefore, we use an empirical approximation to translate the distance between distributions into the distance between experiences drawn from those distributions, as shown in \eqref{help:122}. \emph{Second}, we specify the choice of $\decoder$. The decoder outputs an experience $\tau^{s_1}$ from $\mathcal{D}$ that starts with the same initial state $s_1$. If multiple such experiences exist, we select the most recent one. This $\tau^{s_1}$ naturally reflects the behavior of the LLM for solving the task starting from $s_1$ without intervention. \emph{Third}, we discuss how to approximately solve \eqref{help:122} since enumerating all possible distributions in $\Delta(\mathcal{D})$ is computationally inefficient. Specifically, we define the distance function $d$ and approximately solve $\hat{p}$ as follows:
\begin{equation}
    d(\tau, \tau'):= c\cdot\text{cos}(e(\tau), e(\tau')),\ \hat p(\tau) \propto r(\tau)\cdot\exp(-d(\tau, \tau^{s_1})),\label{transform}
\end{equation}
where $c \geq 0$ is a hyperparameter, ``cos" denotes the cosine function, and $e$ is an embedding function that maps a language sentence to a high-dimensional Euclidean space. In practice, we use $e$ as a language embedding model (e.g., gte-Qwen2 7b \citep{li2023towards}). This approach favors selecting successful experiences from $\mathcal{D}$ with probabilities proportional to the inverse distance from the current initial state $s_1$. The hyperparameter $c$ in \eqref{transform} controls the influence of relative distances: when $c=0$, the method uniformly samples successful experiences from $\mathcal{D}$, and as $c \to \infty$, it deterministically selects the experience closest to $\tau^{s_1}$.

\section{Experiments}
In this section, we present our experimental study evaluating the practical performance of \name{} on real-world LLMs, specifically the Llama 3.1 models \citep{dubey2024llama}. Our results show that \name{} achieves state-of-the-art (SOTA) performance in both task success rate and sample efficiency, surpassing existing baselines to the best of our knowledge. A detailed description of our prompt formulation is provided in Appendix \ref{prompt}. Notably, \name{} is both simple to implement and generalizable across different environments: for each trial, the selected experiences are straightforwardly added to the prompts, requiring no manual modifications.

This prompting strategy offers two distinct advantages: first, it significantly boosts sequential reasoning performance by incorporating cross-task experiences, outperforming reflection-driven methods like Reflexion. Second, the prompts across trials share a substantial prefix, which maximizes the effectiveness of prefix-caching mechanisms in modern LLM-serving systems \citep{zheng2023efficiently}, leading to significant efficiency improvements over RAP \citep{kagaya2024rap}.

\noindent\textbf{Benchmarks} We evaluate our algorithms on three representative benchmarks: \textbf{Alfworld} \citep{shridhar2020alfworld}, \textbf{Webshop} \citep{yao2022webshop}, and \textbf{HotPotQA} \citep{yang2018hotpotqa}. In these benchmarks, agents strive to solve problems in limited number of trials, enabling cross-trial and cross-task experience sharing. In Alfworld, agents are provided with a specific task description within a simulated household environment, interacting through predefined actions and receiving feedback in the form of textual descriptions. In Webshop, the agent must locate a product that matches user specifications from a catalog of over one million items, interacting with the HTML page and search engine while receiving limited product information per trial. In HotPotQA, the agent answers complex questions requiring specific knowledge, using Wikipedia to retrieve relevant articles. In all benchmarks, the reward function $r(\tau)$ is defined as $1$ if the agent successfully completes the task and $0$ otherwise.

\noindent\textbf{LLM Selection} We conduct our entire experiment with the widely-used Llama 3.1 series of models \citep{dubey2024llama}, in consideration of their superior benchmark performance and the sustainability of open-weight LLM ecosystems\footnotemark[1]. Specifically, our experiments are conducted with Llama 3.1 8b Instruct and Llama 3.1 70b Instruct on NVIDIA A6000 and A100 GPUs. We use gte-Qwen2 7b Instruct \citep{li2023towards} as our embedding model. We use SGLang \citep{zheng2023efficiently} as our LLM-serving engine for its SOTA serving performance and prefix-caching mechanism.

\footnotetext[1]{Note that these are a bunch of representative and popular open source models \citep{wang2024helpsteer2preferencecomplementingratingspreferences,he2023teacherlm,xiao2024configurable, hu2024minicpm} can be considered. We choose the latest model at our experiment time due to resource limitations.}

\noindent\textbf{Baselines} We compare \name{} with three representative baselines: \textbf{Reflexion} \citep{shinn2024reflexion}, \textbf{RAP} \citep{kagaya2024rap}, and \textbf{LATS} \citep{zhou2023language}. In Reflexion, the agent try to solve the task in each environment over multiple trials until it succeeds. After each failed attempt, the LLM agent reflects on its unsuccessful trajectory and saves this reflection in its memory. For each subsequent trial, the agent is provided with up to three recent reflections from the same task. In RAP, at each stage within a trial, the agent is presented with the top-$k$ search results of trajectory fragments as in-context demonstrations. In LATS, the agent utilizes a tree-structured search to explore multiple reasoning and action rationales at each trial. When it encounters failed rationales, the agent generates reflections on its mistakes and integrates these insights into its decision-making process for future trials.

\subsection{Results and Analysis}\label{knnresults}
In this section, we demonstrate that \name{} outperforms all baselines across all benchmarks and model sizes, considering both sample efficiency and task success rate. Detailed performance illustrations over multiple trials are presented in Figure \ref{fig:allresult}. Our hyperparameter details are provided in Table \ref{tab:benchmark_hyperparams_updated} in Appendix \ref{exp:details}.

\begin{wraptable}{r}{0.5\textwidth}
\centering
\caption{Performance comparison of Reflexion, RAP, and \name{} on Alfworld benchmark using Llama3.1 8b and 70b models.}

\begin{tabular}{c|c|c}
\toprule
\multirow{2}{*}{Algorithm} & \multicolumn{2}{c}{\textbf{Performance}} \\
                           & \textbf{Llama3.1 8b} & \textbf{Llama3.1 70b} \\
\midrule
Reflexion \footnotemark[2]    & 86 & 94 \\
RAP          & 70                     & 93 \\
\textbf{\name} & \textbf{94}          & \textbf{100} \\
\bottomrule
\end{tabular}

\label{tab:alfworld_baseline}
\end{wraptable}

\footnotetext[2]{The original codebase of Reflexion struggles to perform on most tasks with the smaller Llama3.1 8b model. This is primarily because the model tends to repeat the same action, leading to task failure. To mitigate this, we introduced a resampling mechanism to enhance Reflexion performance, which activates when the model begins to repeat actions. This modification significantly improved Reflexion's performance.}

\noindent\textbf{Alfworld Benchmark} Table \ref{tab:webshop_baseline_rate} and Figures \ref{alf:8b}, \ref{alf:70b} illustrate the comparison between \name{}, Reflexion, and RAP on the Alfworld benchmark. The values represent the success rate after 10 trials across 134 tasks. When using the smaller Llama 3.1 8b model, \name{} reaches a success rate of 94\%, significantly surpassing both Reflexion (86\%) and RAP (70\%). This result is particularly noteworthy because Reflexion requires the much larger Llama 3.1 70b model to achieve similar performance, highlighting superior effectiveness of \name{}. This demonstrates \name{}'s ability to achieve state-of-the-art performance even with limited computational resources and less capable models, offering a clear advantage over other algorithms. Furthermore, when scaling to the larger Llama 3.1 70b model, \name{} achieves a perfect success rate of 100\%. These results emphasize that \name{} scales effectively, consistently outperforming the baselines across model sizes. Although RAP also leverages an in-context demonstrations retrieval mechanism, it lacks an effective experiences selection algorithm, thus noticeably underperforms \name. Additionally, it is important to note that RAP manually splits the agent's planning trajectory into multiple stages for each trial, and these split methods are specific to each benchmark and must be manually tailored. This significantly increases implementation complexity and introduces scalability issues. In contrast, \name{} efficiently reuses successful experiences by directly placing them in the prompts, without requiring benchmark-specific modifications, making it a more practical and flexible solution. As a result, \name{} not only surpasses the baselines in performance but also offers out-of-the-box usability by eliminating the need for manual intervention.

\begin{wraptable}{r}{0.5\textwidth}
\centering
\caption{Performance comparison of Reflexion, RAP, LATS, and \name{} on Webshop benchmark using Llama3.1 8b and 70b models.}

\begin{tabular}{c|c|c}
\toprule
\multirow{2}{*}{Algorithm} & \multicolumn{2}{c}{\textbf{Performance}} \\
                           & \textbf{Llama3.1 8b} & \textbf{Llama3.1 70b} \\
\midrule
Reflexion    & 30 & 30 \\
RAP          & 42 & 42 \\
LATS         & 24 & 32 \\
\textbf{\name} & \textbf{50} & \textbf{56} \\
\bottomrule
\end{tabular}

\label{tab:webshop_baseline_rate}
\end{wraptable}

\noindent\textbf{Webshop Benchmark\footnotemark[3]} Table \ref{tab:webshop_baseline_rate} and Figures \ref{web:8b}, \ref{web:70b} compare the performance of \name{} with all baseline algorithms on the Webshop benchmark, measured in terms of success rate. The values indicate the success rate over 50 products, with each algorithm evaluated through 10 trials per product. For the smaller Llama 3.1 8b model, \name{} achieves a success rate of 50\%, outperforming the next best competitor, RAP, by a substantial absolute improvement of 8\%. When scaling to the larger Llama 3.1 70b model, the performance gain of \name{} becomes even more pronounced, with a success rate of 56\%. This marks a 14\% absolute improvement over RAP.

\footnotetext[3]{We observed that scaling up the model sizes for Reflexion and RAP on the Webshop benchmark did not result in significant improvements. This observation aligns with the original findings of Reflexion \citep[Appendix B.1]{shinn2024reflexion} and RAP \citep[Table 2]{kagaya2024rap}, which suggest that these models tend to converge on local minima that require highly creative strategies to overcome.}

To ensure a fair comparison across the baselines, we modified the LATS baseline by reducing the width of the search tree and limiting the number of trajectory iterations. This adjustment ensures that the running time spent on each baseline is approximately equal. Even with these changes, LATS still exhibits significantly lower sample efficiency. Specifically, the total number of tokens generated by Llama 3.1 8b in LATS (1,555,365 tokens) is nearly five times greater than that in \name{} (314,336 tokens). Further details can be found in Table \ref{exp:tokens}. This discrepancy in token usage highlights the inefficiency of current search-tree-based algorithms. In contrast, \name{} demonstrates much better efficiency and performance under the same inference constraints.

\begin{table}[t!]
\centering
\caption{Token generation count for each of the \textbf{Webshop} experiments. It's worth notice that for each models the LATS token generation count is at least 5 times to \name.}
\begin{tabular}{c|c|c|c|c}
\toprule
\textbf{Algorithm} & \textbf{Reflexion}  & \textbf{RAP} & \textbf{LATS} & \textbf{\name}\\
\midrule
\textbf{Llama3.1 8b} & 159131 & 107504 & 1555365& 314336\\
\textbf{Llama3.1 70b} & 125406 &109245 & 1058752 &113849 \\
\bottomrule
\end{tabular}
\label{exp:tokens}
\end{table}

\begin{wraptable}{r}{0.5\textwidth}
\centering
\caption{Performance comparison of Reflexion, LATS, and \name{} on HotPotQA benchmark using Llama3.1 8b and 70b models.}
\begin{tabular}{c|c|c}
\toprule
\multirow{2}{*}{Algorithm} & \multicolumn{2}{c}{\textbf{Performance}} \\
                           & \textbf{Llama3.1 8b} & \textbf{Llama3.1 70b} \\
\midrule
Reflexion    & 56 & 61 \\
LATS         & 55 & 64 \\
\textbf{\name} & \textbf{63} & \textbf{65} \\
\bottomrule
\end{tabular}
\vspace{0.0cm}

\label{tab:hotpotqa_baseline}
\end{wraptable}
\noindent\textbf{HotPotQA Benchmark} Table \ref{tab:hotpotqa_baseline} and Figures \ref{hot:8b}, \ref{hot:70b} illustrate the comparison between \name{}, Reflexion, and LATS on the HotPotQA benchmark, conducted on 100 question-answering (QA) tasks. The values in the table represent the success rates, with each algorithm being tested over 10 trials. As evidenced by the results, \name{} consistently achieves superior performance relative to both Reflexion and LATS across all model sizes. The advantage of \name{} is particularly evident when using the smaller Llama 3.1 8b model, where \name{} achieves a success rate of 63\%, outperforming Reflexion and LATS by substantial absolute improvements of 7\% and 8\%, respectively. Moreover, even when scaled up to the larger Llama 3.1 70b model, \name{} continues to gain stronger performance. In this setting, \name{} reaches a success rate of 65\%, surpassing Reflexion by 4\% and LATS by 1\%. Note that both Reflexion and LATS baselines demonstrate a significant performance gap when shifting from smaller to larger model, while the results for $\name$ is relatively consistent and maintains the performance edge throughout different sizes of models. This demonstrates that $\name$'s principled cross-task experience sharing mechanism also excels in tasks requiring complex reasoning.

\noindent\textbf{Conclusion} Our experiments across Alfworld, Webshop, and HotPotQA demonstrate that \name{} consistently outperforms state-of-the-art baselines in both task success rate and sample efficiency. Notably, \name{} achieves superior performance even with smaller models like Llama 3.1 8b, highlighting its efficiency and practicality for resource-constrained scenarios. These results validate the effectiveness of leveraging principled cross-task experiences sharing through our theoretically grounded selection strategy, confirming that \name{} enhances sequential reasoning capabilities across diverse tasks and model sizes.


\begin{figure}[t!]
    \centering
    \subfigure[Alfworld (Llama3.1 8b)]{
    \begin{minipage}{0.31\textwidth}
        \centering
        \includegraphics[width=\textwidth]{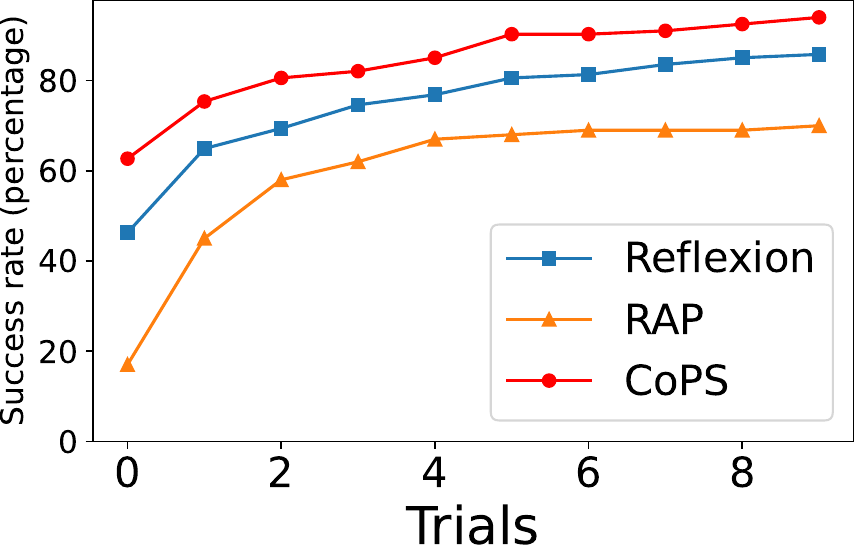} 
        \label{alf:8b}
    \end{minipage}
    }
    \subfigure[Webshop (Llama3.1 8b)]{
    \begin{minipage}{0.31\textwidth}
        \centering
        \includegraphics[width=\textwidth]{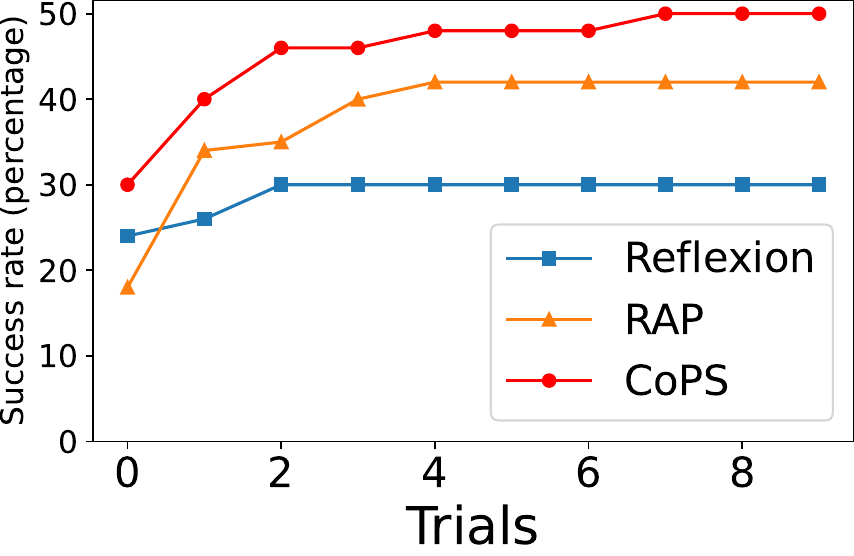}
        \label{web:8b}
    \end{minipage}
    }
    \subfigure[HotPotQA (Llama3.1 8b)]{
    \begin{minipage}{0.31\textwidth}
        \centering
        \includegraphics[width=\textwidth]{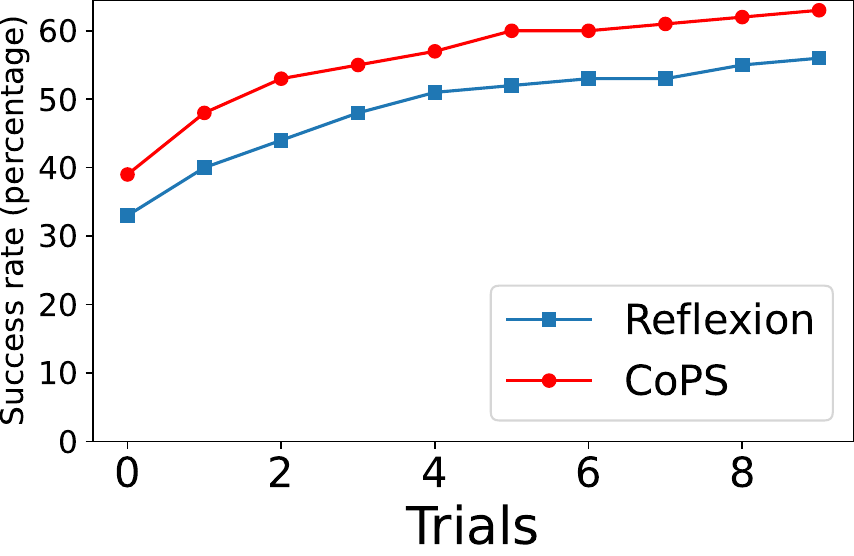} 
        \label{hot:8b}
    \end{minipage}
    }
    \subfigure[Alfworld (Llama3.1 70b)]{
    \begin{minipage}{0.31\textwidth}
        \centering
        \includegraphics[width=\textwidth]{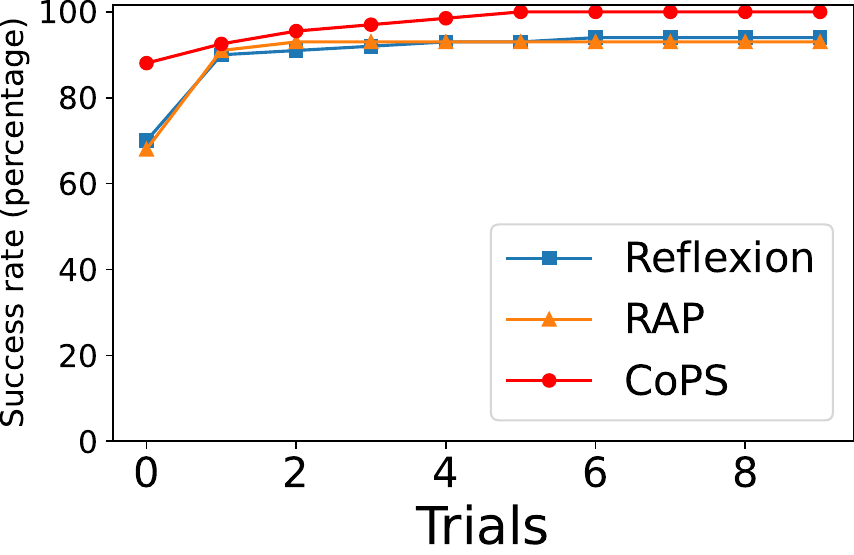} 
        \label{alf:70b}
    \end{minipage}
    }
    \subfigure[Webshop (Llama3.1 70b)]{
    \begin{minipage}{0.31\textwidth}
        \centering
        \includegraphics[width=\textwidth]{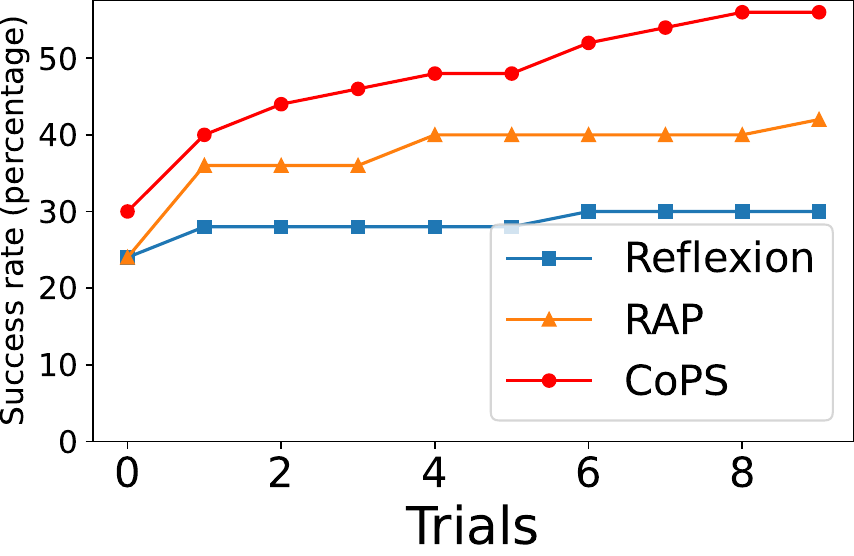} 
        \label{web:70b}
    \end{minipage}
    }
    \subfigure[HotPotQA (Llama3.1 70b)]{
    \begin{minipage}{0.31\textwidth}
        \centering
        \includegraphics[width=\textwidth]{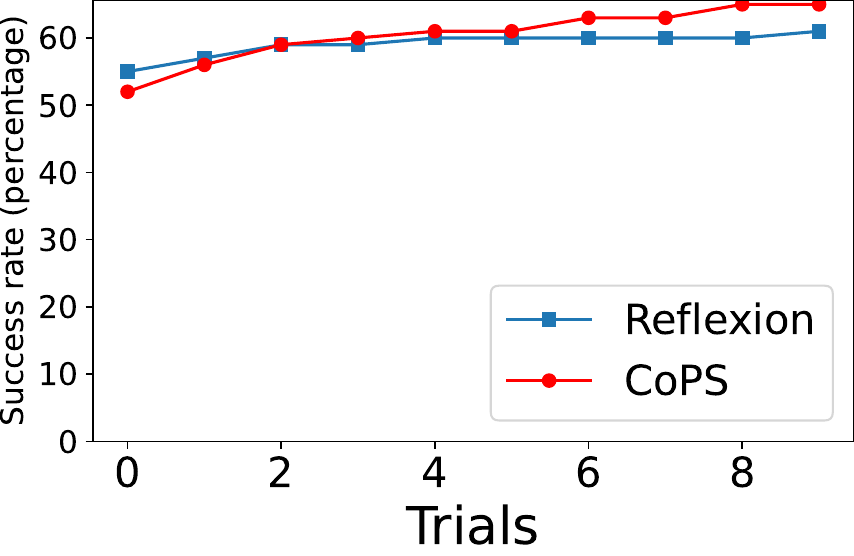} 
        \label{hot:70b}
    \end{minipage}
    }
    \vspace{0.5cm}
    \caption{Comparative evaluation of \name{}, Reflexion, RAP, and LATS across three benchmarks: Alfworld, Webshop, and HotPotQA. The figures illustrate the success rates for both the smaller Llama 3.1 8b and larger Llama 3.1 70b models, averaged over 10 trials.}\label{fig:allresult}
\end{figure}

\subsection{Ablation Study}\label{sec:ablation}

In this section, we analyze how two key hyperparameters affect the performance of \name{}: the scaling factor $c$ in Equation~(\ref{transform}) and the number of in-context experiences $k$ placed at the beginning of prompts. We conducted experiments on the Alfworld benchmark using both Llama 3.1 8b and Llama 3.1 70b models.

\noindent\textbf{Scaling factor} For the scaling factor $c$, we tested four settings: $c = 0$, $1$, $5$ and $10$, while keeping the number of in-context experiences fixed at $k = 5$ (see Figures~\ref{ablc:8b} and \ref{ablc:70b}). Our findings indicate that for smaller models like Llama 3.1 8b, a small but non-zero value of $c$ (e.g., $c = 1$) generally yields better performance (Figure~\ref{ablc:8b}). This suggests that moderate scaling effectively balances model adaptability and robustness on less capable models.

\noindent\textbf{Number of experiences} Regarding the number of in-context experiences $k$, we evaluated values ranging from $1$ to $10$, setting $c = 0$ (see Figures~\ref{ablk:8b} and \ref{ablk:70b}). We observed that performance improves as $k$ increases up to $k = 3$, after which it plateaus for both model sizes. This result indicates that while increasing the in-context experience size enhances performance to a point, adding more than three experiences may not offer substantial gains.

Our ablation study reveals that tuning key hyperparameters in \name{} is crucial for optimal performance. Specifically, for smaller models, a small but non-zero scaling factor \( c \) (e.g., \( c = 1 \)) effectively balances adaptability and robustness. Additionally, increasing the number of in-context experiences \( k \) enhances performance up to \( k = 3 \), beyond which additional experiences offer minimal gains. These insights provide practical guidance for hyperparameter selection, ensuring that \name{} can be efficiently deployed across various settings to maximize its sequential reasoning capabilities.

\begin{figure}[t!]
    \centering
    \subfigure[Llama3.1 8b]{
    \begin{minipage}{0.22\textwidth}
        \centering
        \includegraphics[width=\textwidth]{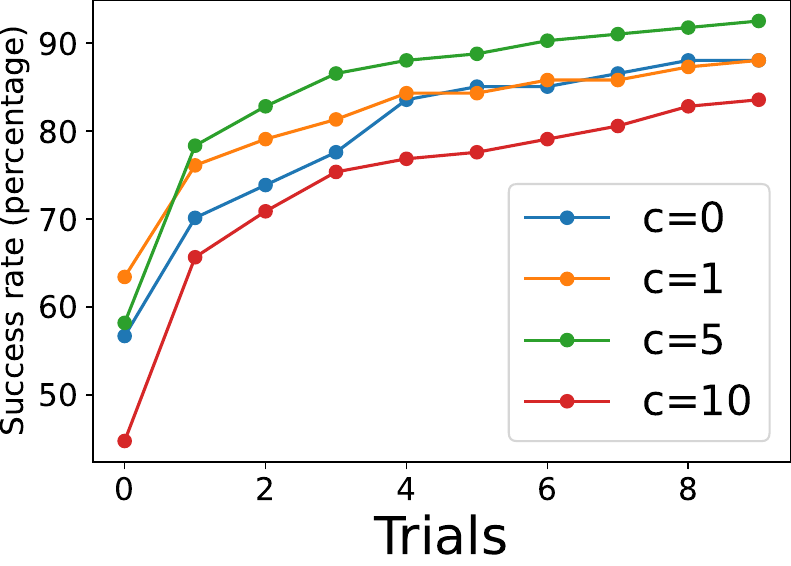} 
        \label{ablc:8b}
    \end{minipage}
    }
    \subfigure[Llama3.1 70b]{
    \begin{minipage}{0.22\textwidth}
        \centering
    \includegraphics[width=\textwidth]{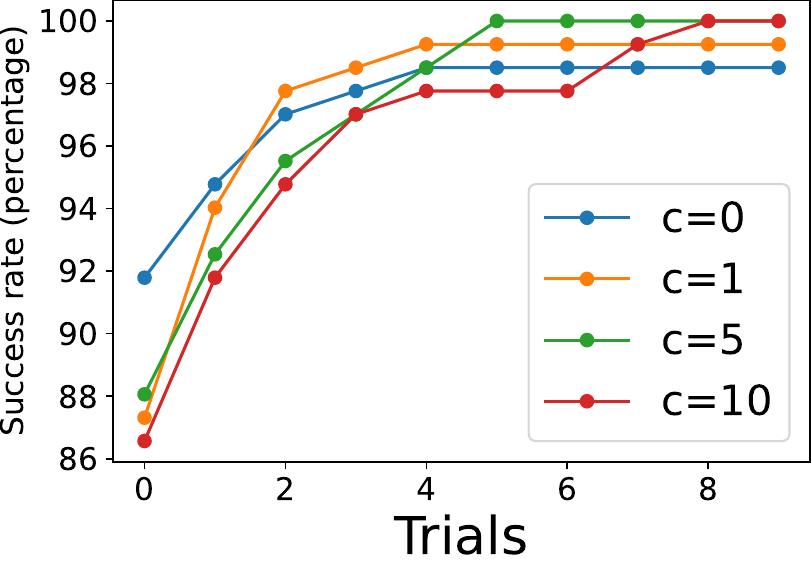}
    \label{ablc:70b}
    \end{minipage}
    }
    \subfigure[Llama3.1 8b]{
    \begin{minipage}{0.22\textwidth}
        \centering
        \includegraphics[width=\textwidth]{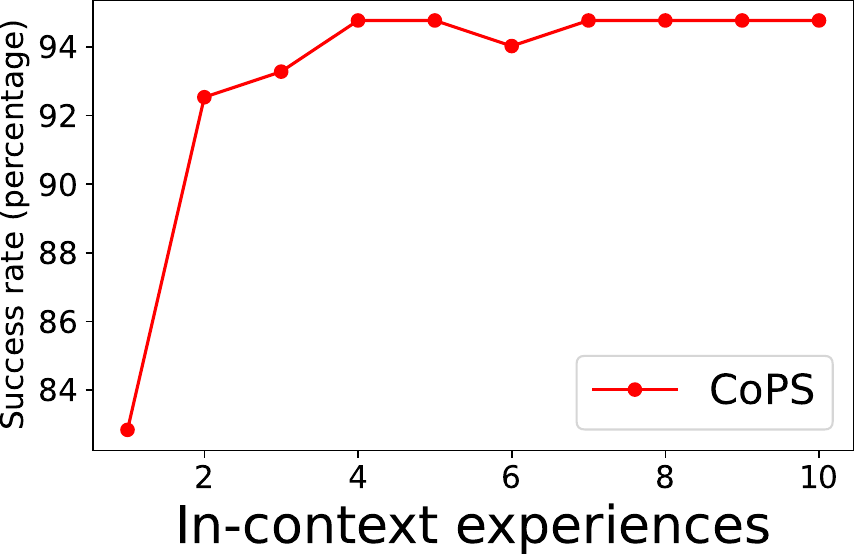} 
        \label{ablk:8b}
    \end{minipage}
    }
    \subfigure[Llama3.1 70b]{
    \begin{minipage}{0.22\textwidth}
        \centering
        \includegraphics[width=\textwidth]{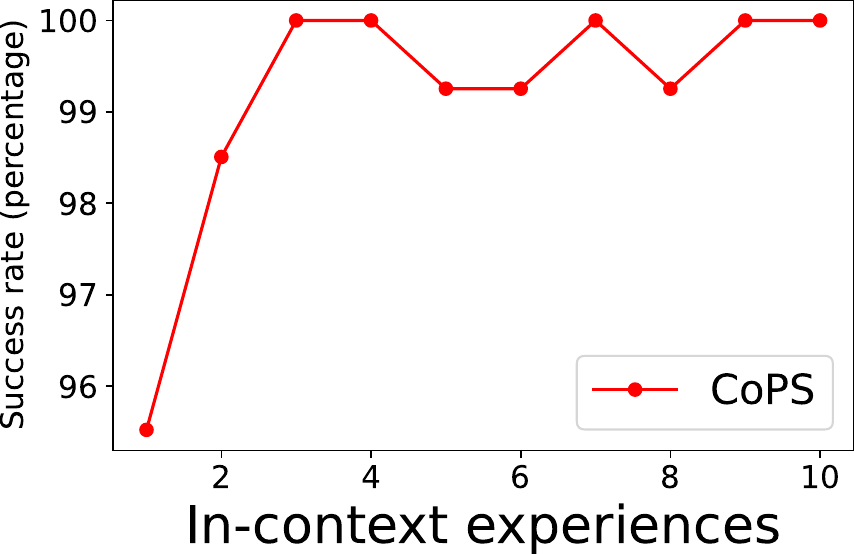} 
       \label{ablk:70b}
    \end{minipage}
    }
    \vspace{0.5cm}
    \caption{Performance impact of hyperparameters $c$ (scaling factor) and $k$ (number of in-context experiences) on the Alfworld benchmark for both Llama 3.1 8b and Llama 3.1 70b models.}\label{ablation:c}
\end{figure}
\section{Theoretical Framework of Experience-Assisted Agents}\label{sec:theory}

In this section, we develop the theoretical framework to demonstrate the effectiveness of \name. For simplicity, we analyze our algorithm in a bandit setting, where the maximum number of steps for each experience is $H = 1$. Slightly different from the formulation in Section \ref{sec:propmethod}, we define an experience as $\tau = s|a|r$, consisting of an initial state $s$, an action $a$, and its reward $r = r(s, a)$.

We introduce additional notations for clarity in our analysis. Let $\cT = \tau_1|\tau_2|\dots$ denote the experience collection. The length of $\cT$ is denoted by $|\cT|$, i.e., $\cT = (\tau_1,...,\tau_{|\cT|})$. We use $\cT_{t}$ to represent the first $t$ steps of the experience collection, i.e., $\cT_t = \tau_1|\dots|\tau_t$. For any experience collection $\cT$, we assume $|\cT| \leq T$. We define $\mathtt{T}$ as the space of all trajectories, and $\mathtt{T}_t$ as the space of trajectories of length $t$. We denote a general algorithm as $\alg(\cdot|\cdot, \cdot, \cdot): \mathtt{M} \times \mathtt{T} \times \mathtt{S} \rightarrow \Delta(\mathtt{A})$, which takes as input a task $\Mb \in \mathtt{M}$, an experience collection $\cT \in \mathtt{T}$, and a state $s \in \mathtt{S}$, and outputs a distribution over actions $a \in \mathtt{A}$. Note that some algorithms may not use the task $\Mb$ as input, in which case we write $\alg(\cdot|\cdot, \cdot)$. We denote $\PP^{\Mb, \alg}_t$ as the distribution over the first $t$ steps of an experience collection under task $\Mb$ and algorithm $\alg$. For an algorithm $\alg$ that takes $\Mb, \cT, s$ as input, we define its \emph{posterior average} as $\overline\alg(\cdot|\cT,s) = \EE_{\Mb\sim \PP^{\mathtt{M}}(\cdot|\cT' = \cT, s' = s)}[\alg(\cdot|\Mb, \cT', s')]$, which is the best Bayesian approximation of $\alg$ given the experience collection $\cT$ and current state $s$. 

\subsection{LLM Pretraining}

We begin by describing the pretraining process for the LLM. Let $\algtheta(\cdot|\cT, s): \mathtt{T} \times \mathtt{S} \rightarrow \Delta(\mathtt{A})$ represent an LLM agent that outputs a distribution over $\mathtt{A}$, where $\hat\btheta\in \bTheta$ is the parameter of the LLM, $\bTheta$ denotes the whole parameter space. We assume that there exists a pretraining dataset $\cD_{\text{pre}} = \{\cT^1, \dots, \cT^{\npre}\}$, with $|\cT^i| = T-1$. Following the pretraining setup in \citet{lin2023transformers}, we assume two algorithms: a \emph{context algorithm}, $\algc(\cdot|\cdot, \cdot): \mathtt{T} \times \mathtt{S} \rightarrow \Delta(\mathtt{A})$, and an \emph{expert algorithm}, $\alge(\cdot|\cdot, \cdot, \cdot): \mathtt{M} \times \mathtt{T} \times \mathtt{S} \rightarrow \Delta(\mathtt{A})$. In general, the context algorithm provides a ``natural" action based on the experience collection and current state, while the expert algorithm provides a more informed action, given the task information, experience collection, and current state. Since the expert algorithm has access to task information $\Mb$, it typically produces better actions than the context algorithm.

We now describe the pretraining process. To generate an experience collection $\cT = \tau_1|\dots|\tau_{T-1} \in \cD_{\text{pre}}$, we first sample a task $\Mb \sim \PP^{\mathtt{M}}$. For each experience $\tau_i$, the state is sampled from the initial state distribution $s_i \sim \PP^{\Mb}_0$, the action is sampled using the context algorithm $a_i \sim \algc(\cdot|\cT_{i-1}, s_i)$, and the reward is given by $r_i = r(s_i, a_i)$. After generating the experience collection, we collect expert feedback $\bar a_1, \dots, \bar a_{T-1}$ for each step of $\cT$, using the expert algorithm, where $\bar a_i \sim \alge(\cdot|\Mb, \cT_{i-1}, s_i)$. Repeating this process $\npre$ times produces the trajectories $\cT^i$ and expert actions $\bar a_1^i, \dots, \bar a_{T-1}^i$ for $i \in [\npre]$. Finally, we pretrain the LLM $\algtheta$ by solving the following maximum likelihood estimation problem: \begin{align} \hat\btheta \leftarrow \argmax_{\btheta\in\bTheta} \sum_{i=1}^{\npre} \sum_{t=1}^{T} \log \alg_{\btheta}(\bar a_t^i|\cT^i_{t-1
}, s_t^i).\notag \end{align}

For the remainder of this paper, we use $\algtheta$ to represent our LLM. Below, we present several standard assumptions for analyzing $\algtheta$. 

\begin{definition}[\citealt{lin2023transformers}]\label{def:1} Let $\bTheta$ be the set of parameters of the LLM, $\alg_{\btheta}$. We call $\bTheta_0 \subseteq \bTheta$ a $\rho$-cover of $\bTheta$ with respect to $\alg_{\btheta}$ if, for any $\btheta \in \bTheta$, there exists $\btheta_0 \in \bTheta_0$ such that \begin{align} 
\forall s \in \mathtt{S}, t \in [T], \cT \in \mathtt{T}_{t-1}, \|\log \alg_{\btheta}(\cdot|\cT, s) - \log \alg_{\btheta_0}(\cdot|\cT, s)\|_\infty \leq \rho.\notag 
\end{align} 
We denote $\cN(\rho) = |\bTheta_0|$ as the $\rho$-covering number of $\alg_{\btheta}$. 
\end{definition}
Next assumption assumes that, the best approximation between the trained LLM and the posterior average of the expert algorithm, $\overline{\alge}$, can be bounded by some constant. 
\begin{assumption}[\citealt{lin2023transformers}]\label{ass:1} There exists $\btheta^* \in \bTheta$ and a \emph{model capacity error} $\ereal > 0$ such that \begin{equation} 
\forall t \in [T], \log \EE_{\Mb \sim \PP^{\mathtt{M}}, s \sim \PP_0^{\Mb}, \cT \sim \PP_{t-1}^{\Mb, \algc}, \bar a \sim \overline{\alge}(\cdot|\cT, s)}\bigg[\frac{\overline{\alge}(\bar a|\cT, s)}{\alg_{\btheta^*}(\bar a|\cT, s)}\bigg] \leq \ereal.\notag \end{equation} 
\end{assumption}

Finally, we make assumptions for the decoder $\decoder$ introduced in Algorithm \ref{alg:proposed_method}. We assume access to a class of decoders $\decoder_t: \mathtt{S}\rightarrow \Delta(\mathtt{T}_{t})$ that maps the state $s$ to a distribution over the space of $t$ number of experiences, capable of estimating the distribution $\PP^{\Mb, \algc}_t(\cT)$, which represents the task-dependent experience distribution offered by LLM.

\begin{assumption}\label{ass:decoder} 
For the decoder $\decoder_t: \mathtt{S} \rightarrow \Delta(\mathtt{T}_t)$, there exists a \emph{decoder coefficient} $C_{\decoder} > 1$ such that for any $t \in [T], \cT \in \mathtt{T}_{t-1}, \Mb \in \mathtt{M}$ and $s \sim \PP_0^{\Mb}$, we have
\begin{equation} 
\frac{1}{C_{\decoder}^2} \leq \frac{\decoder_{t-1}(\cT|s)}{\PP^{\Mb, \algc}_{t-1}(\cT)} \leq C_{\decoder}^2. \notag
\end{equation} 
\end{assumption}

\subsection{Algorithms for the Offline Setting}

We consider the same offline setting as in Section \ref{sec:propmethod}. Suppose we have an offline dataset $\mathcal{D}$, and the agent is given an initial state $s$. We formalize the experience selection problem as a distribution selection problem, where the agent has access to a candidate set of distributions, denoted by $\mathcal{P} = \{\mathbb{P}^{1}(\cdot|\cdot, \cdot), \dots, \mathbb{P}^{|\mathcal{P}|}(\cdot|\cdot, \cdot)\} \subseteq 2^{\mathtt{T}_{T-1} \times \mathtt{S} \rightarrow \Delta(\mathtt{T}_{T-1})}$. Each element in this set represents a mapping from the dataset $\mathcal{D}$ and the current state $s$ to a distribution over trajectories $\mathcal{T}$ of length $T-1$. In general, each $\mathbb{P}^i$ can be interpreted as the distribution over all possible combinations of $T-1$ experiences from the dataset $\mathcal{D}$. The agent’s goal is to select a distribution $\ps$ from $\mathcal{P}$ that minimizes the suboptimality gap, which quantifies the performance difference between the best possible strategy and the strategy selected by the agent, as measured by the expert algorithm:
\begin{align}
    \text{SubOpt}(\ps):=\EE_{\Mb \sim \PP^{\mathtt{M}}, s \sim \PP_0^{\Mb}}\bigg[\max_{\phat \in \cP}\EE_{ \cT \sim \phat, a\sim \overline{\alge}(\cdot|\cT, s)}r(s,a) - \EE_{ \cT \sim \ps, a\sim \overline{\alge}(\cdot|\cT, s)}r(s,a)\bigg].\label{def:offlinegoal}
\end{align}


We propose \texttt{\offlinename} in Algorithm \ref{alg:offlinegen}, which is an experience collection-based version of \texttt{\name}. The core idea of \texttt{\offlinename} mirrors that of \texttt{\name}: the agent seeks to find experience collection that maximize the reward while minimizing the distributional shift from the experience collection of the current task, denoted by LLM. Given the test state $s$, \texttt{\offlinename} first runs the decoder to obtain a distribution $\decoder_{T-1}(\cdot|s)$, which approximates $\mathbb{P}^{\mathcal{M}, \algc}_{t-1}$. Then, \texttt{\offlinename} applies the \emph{pessimism principle}, as in \eqref{help:123}. The selected distribution $\mathbb{P}^* \in \mathcal{P}$ aims to identify a distribution that produces an experience collection which maximizes the reward given the actions provided by the LLM, while staying close to the decoded distribution $\decoder_{T-1}(\cdot|s)$. To measure the distributional distance, we employ the $\chi^2$-distance. Similar to the hyperparameter $c$ in \texttt{\name}, \texttt{\offlinename} introduces a hyperparameter $\epsilon_{\text{pre}}$ to balance the trade-off between maximizing reward and satisfying the regularity condition imposed by $\decoder_{T-1}(\cdot|s)$.

\begin{algorithm}[t!]
\caption{$\offlinename$ }\label{alg:offlinegen}
\begin{algorithmic}[1]
\REQUIRE LLM $\algtheta(\cdot|\cdot, \cdot)$, candidate experience collection distribution $\cP$, pretraining error parameter $\epre$, task decoder $\decoder$, offline dataset $\cD$.
\STATE Receive test state $s$, decode the distribution $\decoder_{T-1}(\cdot|s)$. 
\STATE Select $\ps$ from $\cP$ that maximizes the following:
\begin{align}
&\ps = \argmax_{\phat \in \cP}\EE_{\substack{\cT \sim \phat(\cdot|\cD,s),\\ a\sim \algtheta(\cdot|\cT, s)} }r(s,a) - \epre\sqrt{1+\chi^2(\phat(\cdot|\cD, s), \decoder_{T-1}(\cdot|s))}.\label{def:pess}
    \end{align}
\STATE Generate $\cT^s\sim \ps$ and obtain  $a\sim \algtheta(\cdot|\cT^s, s)$.
\end{algorithmic}
\end{algorithm}

We have the following theorem to characterize the performance of $\offlinename$. 
\begin{theorem}\label{thm:1}
By setting 
\begin{align}
    \epre = C_{\decoder}T\cdot\sqrt{5\cdot T\log(\cN(1/(\npre T)^2)T)\cdot \npre^{-1} + T\ereal},\notag
\end{align}
and denote $\PP^{*,s} = \argmax_{\phat\in\cP}\EE_{ \cT \sim \phat(\cdot|\cD, s), a\sim \overline{\alge}(\cdot|\cT, s)}r(s,a)$, we have the following bound with probability at least $1-2/T$:
    \begin{align}
        \text{SubOpt}(\ps)
        \leq 2C_{\decoder}\epre \EE_{\Mb \sim \PP^{\mathtt{M}}, s \sim \PP_0^{\Mb}}\sqrt{1+\chi^2( \PP^{*,s}(\cdot|\cD, s), \PP^{\Mb, \algc}_{T-1}(\cdot))}.\notag
    \end{align}
\end{theorem}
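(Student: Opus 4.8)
The plan is to decompose the suboptimality gap $\text{SubOpt}(\ps)$ into three pieces: an estimation error coming from using the trained LLM $\algtheta$ in place of the Bayesian-optimal $\overline{\alge}$, the pessimism ``slack'' that the algorithm pays when it chooses $\ps$ via \eqref{def:pess}, and a comparator term that the pessimistic objective grants to $\PP^{*,s}$. For any fixed test state $s$ and any candidate $\phat \in \cP$, write $V^{\alg}(\phat) := \EE_{\cT\sim\phat(\cdot|\cD,s), a\sim\alg(\cdot|\cT,s)} r(s,a)$ for $\alg\in\{\algtheta, \overline{\alge}\}$. Then
\begin{align}
V^{\overline{\alge}}(\PP^{*,s}) - V^{\overline{\alge}}(\ps)
&= \underbrace{\big[V^{\overline{\alge}}(\PP^{*,s}) - V^{\algtheta}(\PP^{*,s})\big]}_{\text{(i)}}
+ \underbrace{\big[V^{\algtheta}(\PP^{*,s}) - V^{\algtheta}(\ps)\big]}_{\text{(ii)}}
+ \underbrace{\big[V^{\algtheta}(\ps) - V^{\overline{\alge}}(\ps)\big]}_{\text{(iii)}}.\notag
\end{align}

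\textbf{Controlling (i) and (iii): the pretraining bound.} The key lemma (which I expect is proved just before this theorem, in the style of \citet{lin2023transformers}) should convert Assumption \ref{ass:1} plus the covering number $\cN(\cdot)$ of Definition \ref{def:1} and the maximum-likelihood pretraining guarantee into a bound on the expected Hellinger (or TV) distance between $\algtheta(\cdot|\cT,s)$ and $\overline{\alge}(\cdot|\cT,s)$ when $\cT$ is drawn from the pretraining-like distribution $\PP^{\Mb,\algc}_{T-1}$; concretely something like $\EE[\disth^2(\algtheta, \overline{\alge})] \lesssim T\log(\cN T)/\npre + \ereal$ per step, and summed over $T$ steps. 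Since rewards lie in $[0,1]$, $|V^{\algtheta}(\phat) - V^{\overline{\alge}}(\phat)| \le \EE_{\cT\sim\phat}\|\algtheta(\cdot|\cT,s) - \overline{\alge}(\cdot|\cT,s)\|_{\mathrm{TV}}$. The catch is that the expectation in the pretraining bound is under $\PP^{\Mb,\algc}_{T-1}$, not under $\phat$, so I perform a change of measure: first from $\phat$ to $\decoder_{T-1}(\cdot|s)$, paying a $\sqrt{1+\chi^2(\phat, \decoder_{T-1}(\cdot|s))}$ factor by Cauchy–Schwarz, and then from $\decoder_{T-1}(\cdot|s)$ to $\PP^{\Mb,\algc}_{T-1}$, paying the decoder coefficient $C_{\decoder}$ from Assumption \ref{ass:decoder}. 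This is exactly why $\epre$ is defined with the $C_{\decoder}T$ prefactor multiplying $\sqrt{5T\log(\cN T)/\npre + T\ereal}$: after the change of measure, $|\text{(iii)}| \le \epre\sqrt{1+\chi^2(\ps, \decoder_{T-1}(\cdot|s))}$ and $|\text{(i)}| \le \epre\sqrt{1+\chi^2(\PP^{*,s}, \decoder_{T-1}(\cdot|s))}$, matching the penalty terms in \eqref{def:pess}.

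\textbf{Controlling (ii): optimality of $\ps$.} By the definition \eqref{def:pess}, $\ps$ maximizes $V^{\algtheta}(\phat) - \epre\sqrt{1+\chi^2(\phat,\decoder_{T-1}(\cdot|s))}$ over $\cP$, and $\PP^{*,s}\in\cP$, so $V^{\algtheta}(\PP^{*,s}) - V^{\algtheta}(\ps) \le \epre\sqrt{1+\chi^2(\PP^{*,s},\decoder_{T-1}(\cdot|s))} - \epre\sqrt{1+\chi^2(\ps,\decoder_{T-1}(\cdot|s))}$. Adding (i), (ii), (iii): the two $-\epre\sqrt{1+\chi^2(\ps,\cdot)}$ contributions from (ii) and the bound on (iii) cancel the positive $\epre\sqrt{1+\chi^2(\ps,\cdot)}$ term, leaving $V^{\overline{\alge}}(\PP^{*,s}) - V^{\overline{\alge}}(\ps) \le 2\epre\sqrt{1+\chi^2(\PP^{*,s}(\cdot|\cD,s),\decoder_{T-1}(\cdot|s))}$. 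Finally I replace $\decoder_{T-1}(\cdot|s)$ by $\PP^{\Mb,\algc}_{T-1}$ inside the square root using Assumption \ref{ass:decoder} once more (the chi-square quantity $1+\chi^2$ inflates by at most $C_{\decoder}^2$, hence an extra $C_{\decoder}$ outside), then take $\EE_{\Mb\sim\PP^{\mathtt{M}}, s\sim\PP_0^{\Mb}}$ of both sides — pulling the expectation inside the square root in the wrong direction is avoided because we keep it outside, exactly as in the theorem statement — to obtain $\text{SubOpt}(\ps)\le 2C_{\decoder}\epre\,\EE_{\Mb,s}\sqrt{1+\chi^2(\PP^{*,s}(\cdot|\cD,s),\PP^{\Mb,\algc}_{T-1}(\cdot))}$. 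The failure probability $2/T$ is inherited from the high-probability pretraining/MLE concentration lemma used for (i) and (iii).

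\textbf{Main obstacle.} The delicate step is the double change of measure in (i) and (iii): I must make sure the $\chi^2$-based Cauchy–Schwarz step pairs the right density ratios (integrand of the pretraining error vs.\ the Radon–Nikodym derivative $d\phat/d\decoder_{T-1}$), and that the per-step Hellinger bound aggregates over $t\in[T]$ with the stated logarithmic covering-number dependence and the $\npre^{-1}$ rate — this is where the constants $5$ and the various factors of $T$ in the definition of $\epre$ are pinned down, and getting the chain of inequalities to close exactly (rather than up to an unspecified constant) requires care with how $\overline{\alge}$'s posterior-averaging interacts with the expectation over $\Mb$.
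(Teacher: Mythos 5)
Your proposal is correct and follows essentially the same route as the paper's proof: the three-term decomposition is just an explicit rewriting of the paper's chain of inequalities, and the key deviation bound $|V^{\overline{\alge}}(\phat)-V^{\algtheta}(\phat)|\le \epre\sqrt{1+\chi^2(\phat,\decoder_{T-1}(\cdot|s))}$ is derived in the paper exactly as you sketch (TV bounded by Hellinger, Cauchy--Schwarz change of measure to $\PP^{\Mb,\algc}_{T-1}$, the pretraining lemma for the Hellinger term, and Assumption \ref{ass:decoder} for the density-ratio term). The only cosmetic difference is that the paper applies Cauchy--Schwarz directly against $\PP^{\Mb,\algc}_{T-1}$ and then factors the ratio through $\decoder_{T-1}$, rather than performing two separate changes of measure, but this yields the identical bound.
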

\begin{proof}
    See Appendix \ref{proof:thm:1}. 
\end{proof}

Theorem \ref{thm:1} provides several insights into why \texttt{\name} achieves superior performance and how experience selection should be tailored under different circumstances:

\begin{itemize}[leftmargin=*]
    \item The final suboptimality gap of the selected distribution $\mathbb{P}^{*,s}$ depends on the decoder coefficient $C_{\decoder}$ and the pretraining error parameter $\epsilon_{\text{pre}}$. This implies that for a more powerful LLM, the selected experience distribution $\mathbb{P}^{*,s}$ will be closer to the optimal distribution. Meanwhile, the dependence of $\PP^{\Mb, \algc}_{T-1}$ suggests that the task-dependent experience collection distribution offered by LLM serves as a strong regularizer to select the optimal retrieval strategy.
    \item The optimal choice of the pretraining error parameter $\epsilon_{\text{pre}}$ is influenced by the decoder coefficient $C_{\decoder}$, the number of pretraining trajectories in the pretraining set $\npre$, and the model capacity error $\epsilon_{\text{real}}$. In general, for a more powerful LLM, where $\npre$ is large and $\epsilon_{\text{real}}$ is small, our theorem suggests that the agent should focus more on aligning the selected experience collection distribution $\mathbb{P}^{*,s}$ with the decoder distribution $\decoder$. This aligns with our observations in Section \ref{sec:ablation}, where smaller models, such as LLaMA 3.1 8b, are more sensitive to the choice of the hyperparameter $c$.
\end{itemize}


\subsection{Algorithms for the Online Setting}\label{sec:online}
\newcommand{\algonline}{{\text{alg}}}
\newcommand{\algts}{{\text{alg}^{\text{TS}}}}
\newcommand{\ptstar}{{\PP^{*,t}}}
\newcommand{\pt}{{\hat\PP^t}}

We also consider an analysis for a variant of $\offlinename$ to the online setting. Here, let $\cP = \{\PP^{1}(\cdot|\cdot, \cdot),\dots, \PP^{|\cP|}(\cdot|\cdot, \cdot)\}\subseteq 2^{\mathtt{T}_{t-1}\times \mathtt{S}\rightarrow \Delta(\mathtt{T}_{t-1})}$ which includes mappings that map an experience collection $\cT_{t-1}$ and a test state $s$ to a distribution over $\mathtt{T}_{t-1}$.  Each $\PP^i$ can be thought as a strategy to pick the experience collection that depends on the past observations. At step $t$, we have history $\cH_{t-1} = \{s_1, a_1, r_1,\dots, s_{t-1}, a_{t-1}, r_{t-1}\}$. Then the agent receives $s_t \sim \PP_0^{\Mb_t}$, where $\Mb_t\sim \PP^{\mathtt{M}}$. Then the agent selects $\PP_t$ by some algorithm and samples $\cT_{t-1}\sim \PP_t(\cdot|\cH_{t-1}, s_t)$. Then the agent takes the action $a_t\sim \algtheta(\cdot|\cT_{t-1}, s_t)$. Her goal is to minimize the following regret:  
\begin{align}
    \text{Regret}_T:=\sum_{t=1}^T\EE_{\Mb_t \sim \PP^{\mathtt{M}}, s_t \sim \PP_0^{\Mb_t}}\bigg[\max_{\PP^i \in \cP}\EE_{ \substack{\cT_{t-1} \sim \PP^i(\cdot|\cH_{t-1}),\\ \bar a\sim \overline{\alge}(\cdot|\cT_{t-1}, s_t)}}r(s_t,\bar a) - \EE_{ \substack{\cT_{t-1} \sim \PP_t(\cdot|\cH_{t-1}), \\a_t\sim \overline{\alge}(\cdot|\cT_{t-1}, s_t)}}r(s_t,a_t)\bigg].\label{def:onlinegoal}
\end{align}

We propose the algorithm $\onlinename$ in Algorithm \ref{alg:onlinegen}. Similar to $\offlinename$, $\onlinename$ adapts an decoder that takes the current state as its input and outputs a distribution of the experience collection $\cT$, which aims to estimate the LLM output distribution $\PP_{t-1}^{\Mb_t, \algc}$. Unlike $\offlinename$, the optimization goal of $\onlinename$ in \eqref{def:optim} is similar to the \emph{optimistic principle} that originates from the online decision-making problems \citep{abbasi2011improved}, which aims to maximize both the reward and the distribution distance between the decoder distribution $\decoder_{t-1}$ and the selected one $\pt$. Meanwhile, note that the selected experience collection distribution only depends on the past history $\cH_{t-1}$, which is small in the early stage of the online decision-making process. We have the following theorem to demonstrate the theoretical guarantee of $\onlinename$.

\begin{algorithm}
\caption{$\onlinename$}\label{alg:onlinegen}
\begin{algorithmic}[1]
\REQUIRE LLM $\algtheta(\cdot|\cdot, \cdot)$, candidate experience collection distribution $\cP$, pretraining error parameter $\epre$, task decoder $\decoder$.
\STATE Let $\cH_0 = \emptyset$.
\FOR{$t = 1,\dots, T$}
\STATE Generate $\Mb_t \sim \PP^{\mathtt{M}}$, receive $s_t \sim \PP_0^{\Mb_t}$, decode $\decoder_{t-1}(\cdot|s_t)$
\STATE Select $\pt$ from $\cP$ that maximizes the following:
\begin{small}
    \begin{equation}
\pt = \argmax_{\phat \in \cP}\EE_{ \substack{\cT \sim \phat(\cdot|\cH_{t-1}, s_t), \\a\sim \algtheta(\cdot|\cT, s_t)}}r(s_t,a) + \epre\sqrt{1+\chi^2(\phat(\cdot|\cH_{t-1}, s_t), \decoder_{t-1}(\cdot|s_t))}.\label{def:optim}
    \end{equation}
\end{small}
\STATE Generate $\cT\sim \pt(\cdot|\cH_{t-1}, s_t)$ and obtain $a_t\sim \algtheta(\cdot|\cT, s_t)$ and $r_t = r(s_t, a_t)$, set $\cH_t = \cH_{t-1}\cup (s_t, a_t, r_t)$.
\ENDFOR 
\end{algorithmic}
\end{algorithm}

\begin{theorem}\label{thm:2}
By setting 
\begin{align}
    \epre = C_{\decoder}\cdot T^2\cdot\sqrt{5\cdot \frac{T\log(\cN(1/(\npre T)^2)T^2)}{\npre} + T\ereal},\notag
\end{align}
and denote $\PP^{*,t} = \argmax_{\hat \PP \in \cP}\EE_{ \substack{\cT_{t-1} \sim \hat \PP(\cdot|\cH_{t-1}, s_t),\\ \bar a\sim \overline{\alge}(\cdot|\cT_{t-1}, s_t)}}r(s_t,\bar a)$, we have the following bound holds with probability at least $1-2/T$:  
    \begin{align}
        \text{Regret}_T
        \leq 2C_{\decoder}\epre\sum_{t=1}^T\sqrt{1+\chi^2(\PP^{*,t}(\cdot|\cH_{t-1}, s_t), \PP^{\Mb_{t}, \algc}_{t-1}(\cdot))}.\notag
    \end{align}
\end{theorem}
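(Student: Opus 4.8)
The plan is to transport the argument behind Theorem~\ref{thm:1} to the online, optimistic setting: bound the per-round regret by a pretraining-estimation term plus an optimization term controlled by the $\argmax$ in \eqref{def:optim}, then sum over $t=1,\dots,T$. For $g\in\{\algtheta,\overline{\alge}\}$ abbreviate $V^{g}_t(\phat):=\EE_{\cT\sim\phat(\cdot|\cH_{t-1},s_t)}\EE_{a\sim g(\cdot|\cT,s_t)}r(s_t,a)$, so that $\ptstar$ is the maximizer of $V^{\overline{\alge}}_t$ and $\pt$ is the maximizer of $V^{\algtheta}_t+b_t$, where $b_t(\phat):=\epre\sqrt{1+\chi^2(\phat(\cdot|\cH_{t-1},s_t),\decoder_{t-1}(\cdot|s_t))}$.

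\textbf{Step 1: a per-round estimation lemma.} I would first show that for every $\phat\in\cP$, every $t$, and every realized $(\cH_{t-1},s_t)$,
\[
\bigl|V^{\overline{\alge}}_t(\phat)-V^{\algtheta}_t(\phat)\bigr|\ \le\ C_{\decoder}\,\sqrt{2\,\EE_{\cT\sim\PP^{\Mb_t,\algc}_{t-1}}\disth^2\bigl(\algtheta(\cdot|\cT,s_t),\overline{\alge}(\cdot|\cT,s_t)\bigr)}\cdot\sqrt{1+\chi^2\bigl(\phat(\cdot|\cH_{t-1},s_t),\decoder_{t-1}(\cdot|s_t)\bigr)}.
\]
Since $r\in[0,1]$, the left side is at most $\EE_{\cT\sim\phat}\disttv(\algtheta(\cdot|\cT,s_t),\overline{\alge}(\cdot|\cT,s_t))$; changing measure from $\phat$ to $\decoder_{t-1}(\cdot|s_t)$ and applying Cauchy--Schwarz produces the factor $\sqrt{1+\chi^2(\phat,\decoder_{t-1})}$ times $\sqrt{\EE_{\cT\sim\decoder_{t-1}}\disttv^2}$; Assumption~\ref{ass:decoder} bounds the latter expectation by a $C_{\decoder}^2$-multiple of the same expectation under $\PP^{\Mb_t,\algc}_{t-1}$; and $\disttv\le\sqrt2\,\disth$ gives the Hellinger form.

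\textbf{Step 2: calibrating $\epre$ through the pretraining analysis.} The sum over $t$ of the Hellinger errors in Step~1 is exactly the object controlled by the MLE pretraining guarantee. Combining the $\rho$-cover of Definition~\ref{def:1} with Assumption~\ref{ass:1}, the standard log-likelihood concentration argument of \citet{lin2023transformers} gives, on an event of probability at least $1-2/T$, that $\EE_{\Mb\sim\PP^{\mathtt M},\,s\sim\PP_0^{\Mb},\,\cT\sim\PP^{\Mb,\algc}_{t-1}}\disth^2(\algtheta,\overline{\alge})= O\bigl(\log(\cN(1/(\npre T)^2)T^2)/\npre+\ereal\bigr)$ for all $t\in[T]$ simultaneously. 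The union bound over the $T$ context lengths, together with the extra Cauchy--Schwarz over $(\Mb_t,s_t)$ needed to detach the $\chi^2$ factor from the pretraining expectation, accounts for the extra factor of $T$ and the $\log T^2$ relative to Theorem~\ref{thm:1}. With the stated choice of $\epre$, Step~1 reads $|V^{\overline{\alge}}_t(\phat)-V^{\algtheta}_t(\phat)|\le b_t(\phat)$ for all $\phat$ and $t$ on that event.

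\textbf{Step 3: the optimization step and summation.} Fix $t$. Adding and subtracting $V^{\algtheta}_t$ at $\ptstar$ and at $\pt$,
\[
V^{\overline{\alge}}_t(\ptstar)-V^{\overline{\alge}}_t(\pt)
=\bigl(V^{\overline{\alge}}_t(\ptstar)-V^{\algtheta}_t(\ptstar)\bigr)
+\bigl(V^{\algtheta}_t(\ptstar)+b_t(\ptstar)-V^{\algtheta}_t(\pt)-b_t(\pt)\bigr)
+\bigl(V^{\algtheta}_t(\pt)-V^{\overline{\alge}}_t(\pt)\bigr)
+\bigl(b_t(\pt)-b_t(\ptstar)\bigr),
\]
where the second bracket is $\le 0$ because $\pt$ maximizes $V^{\algtheta}_t+b_t$, and the first and third are at most $b_t(\ptstar)$ and $b_t(\pt)$ by Step~2. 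I would then convert every divergence against $\decoder_{t-1}(\cdot|s_t)$ into a divergence against $\PP^{\Mb_t,\algc}_{t-1}$ using Assumption~\ref{ass:decoder} (which yields $1+\chi^2(\cdot,\decoder_{t-1})\le C_{\decoder}^2(1+\chi^2(\cdot,\PP^{\Mb_t,\algc}_{t-1}))$ and the reverse inequality with $C_{\decoder}^{-2}$), collect the $C_{\decoder}$ powers into a single factor $2C_{\decoder}$, sum over $t$, and take $\EE_{\Mb_t,s_t}$ — which commutes through since each per-round bound holds pointwise in $(\cH_{t-1},s_t)$ — to obtain the claimed regret bound on the same $1-2/T$ event.

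\textbf{Main obstacle.} The crux is Step~3. In the offline Theorem~\ref{thm:1} the pessimism penalty $-\epre\sqrt{1+\chi^2(\cdot,\decoder)}$ in \eqref{def:pess} has the opposite sign to the estimation error on the \emph{selected} distribution, so it cancels it and leaves a bound purely in terms of the optimal $\PP^{*,s}$. Under the optimistic objective \eqref{def:optim} the bonus $b_t(\pt)$ has the \emph{same} sign as the estimation error $V^{\algtheta}_t(\pt)-V^{\overline{\alge}}_t(\pt)$, so the telescoping above naturally leaves $2b_t(\pt)$ — a bound in terms of the \emph{selected} $\pt$. Reaching the $\ptstar$-form in the statement therefore needs an additional comparison between $\pt$ and $\ptstar$ (beyond the plain $\argmax$ inequality), leveraging Assumption~\ref{ass:decoder} and the boundedness $r\in[0,1]$; nailing this down while keeping all $C_{\decoder}$ factors and the pretraining constant uniform over the $T$ rounds and the random histories $\cH_{t-1}$ is where I expect most of the work to lie.
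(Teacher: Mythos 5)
Your Steps 1 and 2 reproduce the paper's own argument almost verbatim: the per-round estimation inequality is exactly \eqref{online:1} in the paper (same TV-to-Hellinger bound, same change of measure to $\PP^{\Mb_t,\algc}_{t-1}$ via Cauchy--Schwarz, same use of Assumption \ref{ass:decoder} to pick up the $C_{\decoder}^2$ factor), and the calibration of $\epre$ is done there by combining Lemma \ref{lemma:linpretrain} with the Markov/union-bound device of Lemma \ref{lemma:pretrain}, taking $m_c=T^2$ and $\delta=1/T^2$ so that all $T$ per-round events hold simultaneously with probability at least $1-2/T$. Up to that point your plan and the paper's proof coincide.

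The obstacle you flag in Step 3 is genuine, and you should be aware that the paper does not actually resolve it. In the displayed chain \eqref{online:2} the paper starts from $\EE_{\cT\sim\pt,\,a\sim\overline{\alge}}r(s_t,a)$ and passes from $\EE_{\cT\sim\pt,\,a\sim\algtheta}r+\epre\sqrt{1+\chi^2(\pt,\decoder_{t-1})}$ to $\EE_{\cT\sim\ptstar,\,a\sim\algtheta}r+\epre\sqrt{1+\chi^2(\ptstar,\decoder_{t-1})}$, justified ``due to the optimism principle''; but $\pt$ is the \emph{maximizer} of that very objective in \eqref{def:optim}, so the argmax inequality runs in the opposite direction. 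Moreover the chain, even granted, upper-bounds $V^{\overline{\alge}}_t(\pt)$ by $V^{\overline{\alge}}_t(\ptstar)$ plus a bonus, which (since $\ptstar$ maximizes $V^{\overline{\alge}}_t$) is vacuous and bounds the per-round regret from below rather than above. Exactly as you observe, with the optimistic sign the honest conclusion of the telescoping is $\text{Regret}_T\le 2C_{\decoder}\epre\sum_{t=1}^T\sqrt{1+\chi^2(\pt(\cdot|\cH_{t-1},s_t),\PP^{\Mb_t,\algc}_{t-1}(\cdot))}$, a bound in terms of the \emph{selected} $\pt$; the $\ptstar$ form stated in the theorem would follow immediately from your own decomposition if the sign in \eqref{def:optim} were negative (pessimism, as in the offline Algorithm \ref{alg:offlinegen}). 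The ``additional comparison between $\pt$ and $\ptstar$'' that you correctly identify as missing is not supplied by the paper either, so your proposal is faithful to the paper's route and transparent about the one step where that route, as printed, does not go through.
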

\begin{proof}
    See Appendix \ref{thm:75}. 
\end{proof}
Similar to Theorem \ref{thm:1} for the offline setting, Theorem \ref{thm:2} also shares the following insights. 
\begin{itemize}[leftmargin = *]
    \item The regret is controlled by the difference between the best experience collection generated distribution $\ptstar$ and the experience collection distribution induced by the contextual algorithm at $t$-th step. Therefore, the best strategy overall is to select trajectories from the history $\cH_{t-1}$ that can approximates the current task well to avoid the distribution shift. 
    \item With a more powerful LLM, the $\epre$ will be smaller, which means the selected experience collection can approximate the best selection better. 
\end{itemize}

\section{Conclusion, Limitation and Future Work}

In this paper, we introduced \name{} (\textbf{\underline{C}}r\textbf{\underline{o}}ss-Task Ex\textbf{\underline{p}}erience \textbf{\underline{S}}haring), a theoretically
grounded algorithm that empowers agent systems with cross-task experiences sharing. Using a pessimism-based strategy to select relevant experiences, \name{} maximizes utility while minimizing the risks of distribution shifts. Our experiments on benchmarks like Alfworld, Webshop, and HotPotQA demonstrate that \name{} outperforms state-of-the-art methods in both success rates and sample efficiency. Theoretically, we show that our algorithm's performance depends on the LLM’s pre-trained quality and the matching between the cross-task experience distribution decided by the trials selected by the agent, and a task-dependent experience distribution denoted by the LLM, providing insights for improving experience retrieval methods.

While \name{} shows clear improvements over existing methods, it has several limitations. Its effectiveness heavily depends on the quality and diversity of the experiences in the memory bank, meaning that outdated or poorly aligned experiences can reduce its performance. Additionally, \name{} is sensitive to hyperparameters like the scaling factor and the number of in-context experiences, which may require time-consuming tuning that doesn’t always generalize well across different tasks or models. Finally, the theoretical guarantees we provide also rely on assumptions about the accuracy of the decoder and specific pre-training properties of the LLM, which may not always hold in real-world scenarios. 

Looking ahead, several research directions could further improve \name{}.
These include developing adaptive hyperparameter tuning methods, exploring dynamic memory management to keep experiences relevant, and adding ways to assess experience quality. Additionally, adapting \name{} for multi-agent systems and integrating it with reinforcement learning could enable collaborative learning and continuous performance improvement. Pursuing these areas will help LLM agents handle even more complex sequential reasoning tasks.


\appendix

\section{Additional Details in Section \ref{sec:theory}}
\subsection{Proof of Theorem \ref{thm:1}}\label{proof:thm:1}
We prove Theorem \ref{thm:1} here. First, we need the following lemmas. 
\begin{lemma}[Lemma 20, \citealt{lin2023transformers}]\label{lemma:linpretrain}
With probability at least $1-\delta$, we have
\begin{align}
    &\EE_{\Mb \sim \PP^{\mathtt{M}}, s \sim \PP_0^{\Mb}, \cT \sim \PP^{\Mb, \algc}_{T-1}}\bigg[\sum_{t=1}^T\disth^2(\overline{\alge}(\cdot|\cT_{t-1}, s), \algtheta(\cdot|\cT_{t-1}, s))\bigg] \notag \\
    & \leq 5\cdot \frac{T\log(\cN(1/(\npre T)^2)T/\delta)}{\npre} + T\ereal, \notag
\end{align}
    where the covering number $\cN$ is defined in Definition \ref{def:1}, $\ereal$ is defined in Assumption \ref{ass:1}. 
\end{lemma}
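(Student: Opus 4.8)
The plan is to treat this as a standard maximum-likelihood generalization bound for the autoregressive conditional model, following the information-theoretic MLE analysis underlying \citet{lin2023transformers}. The estimator $\hat\btheta$ maximizes the total log-likelihood of the expert actions $\bar a_t^i$ over the pretraining data, so the first step is the \emph{basic inequality}: by optimality of $\hat\btheta$ against the reference parameter $\btheta^*$ from Assumption \ref{ass:1},
\begin{align}
\sum_{i=1}^{\npre}\sum_{t=1}^T \log \frac{\alg_{\hat\btheta}(\bar a_t^i|\cT_{t-1}^i, s_t^i)}{\alg_{\btheta^*}(\bar a_t^i|\cT_{t-1}^i, s_t^i)} \geq 0. \notag
\end{align}
I would then split the analysis into (i) controlling $\hat\btheta$ against the posterior-average expert $\overline{\alge}$ and (ii) controlling $\btheta^*$ against $\overline{\alge}$, the latter being exactly the model-capacity quantity bounded by $\ereal$ in Assumption \ref{ass:1}.

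The heart of the argument is an exponential-moment (Chernoff) bound that converts log-likelihood ratios into squared Hellinger distances. For a fixed $\btheta$ I would consider the per-sample statistic $Z_i(\btheta)=\frac{1}{2}\sum_{t=1}^T\log\big(\alg_{\btheta}/\overline{\alge}\big)(\bar a_t^i|\cT_{t-1}^i,s_t^i)$ and use the chain-rule factorization of the autoregressive likelihood together with the elementary identity $\EE_{x\sim p}\sqrt{q(x)/p(x)}=1-\disth^2(p,q)$. Taking conditional expectations step by step along the trajectory (a martingale/telescoping argument in $t$, where the history $\cT_{t-1}$ is generated by the context algorithm $\algc$ while $\bar a_t\sim\overline{\alge}(\cdot|\cT_{t-1},s)$) yields a one-sided exponential inequality showing that $\EE\big[\exp(2Z_i(\btheta))\big]$ is controlled by the cumulative Hellinger affinity. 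A Markov bound then gives, with probability at least $1-\delta$, a deviation inequality in which the population Hellinger term appears with a strictly negative coefficient and the cost is $\log(1/\delta)$.

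To make the bound hold at the data-dependent $\hat\btheta$, I would discretize the parameter space: take the $\rho$-cover $\bTheta_0$ of size $\cN(\rho)$ from Definition \ref{def:1}, apply the exponential-moment bound with a union bound over $\bTheta_0$ (contributing $\log\cN(\rho)$), and control the replacement of $\hat\btheta$ by its nearest cover element using the uniform $\rho$-closeness of the log-densities. Choosing the resolution $\rho=1/(\npre T)^2$ makes this approximation error negligible, i.e.\ of order $1/\npre$, so that it is absorbed into the leading constant; the extra factor $T$ inside the logarithm comes from the union bound over the $T$ per-step terms.

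Finally, I would assemble the pieces: the basic inequality forces the empirical log-likelihood ratio of $\hat\btheta$ versus $\overline{\alge}$ to exceed that of $\btheta^*$; the concentration step converts this into a population bound $\EE\big[\sum_t\disth^2(\overline{\alge},\algtheta)\big]=O\big((\log\cN(\rho)+\log(T/\delta))/\npre\big)$; and Assumption \ref{ass:1} contributes the additive $T\ereal$ term from the $\btheta^*$-to-$\overline{\alge}$ gap. Tracking the constants yields the stated bound with the factor $5$. The main obstacle is the second step: setting up the exponential-moment argument correctly for the \emph{autoregressive} likelihood, where each conditional factor depends on the randomly generated history $\cT_{t-1}$ and the base measure for the expert actions differs from the sampling law of the trajectory, so that the Hellinger terms must be produced with the correct conditioning at every step.
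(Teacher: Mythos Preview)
The paper does not prove this lemma at all: it is quoted verbatim as Lemma~20 of \citet{lin2023transformers} and used as a black box in the proofs of Theorems~\ref{thm:1} and~\ref{thm:2}. There is no argument in the paper to compare against; the ``proof'' is the citation.

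Your sketch is a reasonable reconstruction of the standard information-theoretic MLE analysis that underlies such bounds (basic inequality from optimality of $\hat\btheta$, the exponential-moment/Hellinger-affinity identity $\EE_{p}\sqrt{q/p}=1-\disth^2(p,q)$ applied conditionally along the autoregressive chain, a $\rho$-net plus union bound to handle the random $\hat\btheta$, and the additive $T\ereal$ from Assumption~\ref{ass:1}). This is presumably close in spirit to what \citet{lin2023transformers} actually do. One minor inaccuracy: the factor $T$ inside the logarithm in the statement does not arise from a ``union bound over the $T$ per-step terms''---the per-step Hellinger contributions are handled jointly through the product structure of the likelihood, not by separate concentration at each $t$---but rather from the choice of covering resolution and the way the approximation error is accumulated over the $T$ conditional factors. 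This does not affect the overall shape of the bound. In any case, since the paper itself simply imports the result, your write-up goes well beyond what the paper provides.
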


Next lemma is used to provide a per-state guarantee for the generalization error. 
\begin{lemma}\label{lemma:pretrain}
Let event $\event$ be defined as
\begin{align}
    &\EE_{\cT \sim \PP^{\Mb, \algc}_{T-1}}\bigg[\sum_{t=1}^T\disth^2(\overline{\alge}(\cdot|\cT_{t-1}, s), \algtheta(\cdot|\cT_{t-1}, s))\bigg]  \leq m_c\bigg[c\cdot \frac{T\log(\delta^{-1}\cN(1/(\npre T)^2)T)}{\npre} + T\ereal\bigg], \notag
\end{align}
    where $\ereal$ is defined in Assumption \ref{ass:1}. Then we have $\PP(\cE)\geq 1-1/m_c - \delta$. 
\end{lemma}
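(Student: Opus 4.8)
The plan is to derive Lemma~\ref{lemma:pretrain} from Lemma~\ref{lemma:linpretrain} by a simple Markov-type argument on the outer expectation over $\Mb$ and $s$. Write $X = X(\Mb, s) := \EE_{\cT \sim \PP^{\Mb,\algc}_{T-1}}\big[\sum_{t=1}^T \disth^2(\overline{\alge}(\cdot|\cT_{t-1},s), \algtheta(\cdot|\cT_{t-1},s))\big] \geq 0$. Lemma~\ref{lemma:linpretrain}, applied with failure probability $\delta$, says that with probability at least $1-\delta$ over the draw of the pretraining dataset $\cD_{\text{pre}}$ (equivalently, over $\hat\btheta$), the quantity $\EE_{\Mb \sim \PP^{\mathtt{M}}, s \sim \PP_0^{\Mb}}[X(\Mb, s)]$ is bounded by $B := 5 T\log(\cN(1/(\npre T)^2)T/\delta)\npre^{-1} + T\ereal$. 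On this high-probability event, since $X \geq 0$, Markov's inequality over $(\Mb, s) \sim \PP^{\mathtt{M}} \otimes \PP_0^{\Mb}$ gives $\PP_{\Mb, s}(X(\Mb, s) > m_c \cdot B) \leq 1/m_c$.

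The next step is to combine the two sources of randomness. Let $A$ be the event (over $\cD_{\text{pre}}$) that $\EE_{\Mb, s}[X] \leq B$, so $\PP(A) \geq 1 - \delta$, and on $A$ define the event $\event$ (jointly over $\cD_{\text{pre}}$ and $(\Mb, s)$) that $X(\Mb, s) \leq m_c B$. A union bound over the failure of $A$ and the conditional failure of $\event$ given $A$ yields $\PP(\event^c) \leq \PP(A^c) + \PP(\event^c \mid A) \leq \delta + 1/m_c$, hence $\PP(\event) \geq 1 - 1/m_c - \delta$. One then just has to match constants: the statement of Lemma~\ref{lemma:pretrain} writes the bound as $m_c[c \cdot T\log(\delta^{-1}\cN(1/(\npre T)^2)T)\npre^{-1} + T\ereal]$, so taking $c = 5$ (and noting $\log(\delta^{-1}\cN(\cdot)T) = \log(\cN(\cdot)T/\delta)$) reproduces $m_c B$ up to the harmless fact that the $T\ereal$ term is not scaled inside $B$ but $m_c \geq 1$ only inflates it; if one wants an exact inequality, absorb everything into $m_c B$ since $m_c B \geq B \geq$ the bracketed expression times $1$. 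This bookkeeping is the only place where care is needed, and it is routine.

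I do not expect a genuine obstacle here: the lemma is essentially "expectation bound $\Rightarrow$ per-instance bound with a $1/m_c$ failure probability," a textbook application of Markov's inequality, plus a union bound to fold in the $\delta$-probability pretraining event. The one thing to be careful about is the order of quantifiers — the event $\event$ in the lemma is stated for a generic $s$ (and implicitly $\Mb$), and it must be read as a joint statement over the randomness of pretraining and of $(\Mb, s)$, not as a uniform-over-all-$s$ guarantee; the proof should make this explicit so that the later use of $\event$ inside the proof of Theorem~\ref{thm:1} (where the outer expectation $\EE_{\Mb \sim \PP^{\mathtt{M}}, s \sim \PP_0^{\Mb}}$ appears) is justified. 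I would therefore state at the outset that $\event$ is an event in the product space and that all subsequent probabilities are with respect to that space.
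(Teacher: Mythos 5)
Your proposal is correct and follows essentially the same route as the paper's own proof: apply Markov's inequality over the draw of $(\Mb, s)$ to convert the averaged bound of Lemma \ref{lemma:linpretrain} into a per-instance bound with failure probability $1/m_c$, then union-bound with the $\delta$-probability failure of the pretraining guarantee. Your explicit treatment of the product probability space and the constant bookkeeping is slightly more careful than the paper's, but the argument is the same.
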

\begin{proof}
    By Markov inequality, we have that with probability at most $1/m_c$, 
    \begin{align}
        &\EE_{\cT \sim \PP^{\Mb, \algc}_{T-1}}\bigg[\sum_{t=1}^T\disth^2(\overline{\alge}(\cdot|\cT_{t-1}, s), \algtheta(\cdot|\cT_{t-1}, s))\bigg]\notag \\
        & \geq m_c\cdot\EE_{\Mb \sim \PP^{\mathtt{M}}, s \sim \PP_0^{\Mb}, \cT \sim \PP^{\Mb, \algc}_{T-1}}\bigg[\sum_{t=1}^T\disth^2(\overline{\alge}(\cdot|\cT_{t-1}, s), \algtheta(\cdot|\cT_{t-1}, s))\bigg]. \notag
    \end{align}
    Meanwhile, by Lemma \ref{lemma:linpretrain}, we know that with probability at most $\delta$, we have
    \begin{align}
        &\EE_{\Mb \sim \PP^{\mathtt{M}}, s \sim \PP_0^{\Mb}, \cT \sim \PP^{\Mb, \algc}_{T-1}}\bigg[\sum_{t=1}^T\disth^2(\overline{\alge}(\cdot|\cT_{t-1}, s), \algtheta(\cdot|\cT_{t-1}, s))\bigg] \notag \\
        &\geq c\cdot \frac{T\log(\delta^{-1}\cdot\cN(1/(\npre T)^2)T)}{\npre} + T\ereal.\notag
    \end{align}
    Therefore, by the union bound, we have $\PP(\event)\geq 1-\delta - 1/m_c$.
\end{proof}

Now we begin to prove Theorem \ref{thm:1}. 
\begin{proof}
We follow the proof steps in \citet{lin2023transformers}. We suppose that the event $\event$ denoted in Lemma \ref{lemma:pretrain} holds. We first bound the difference of reward by the difference between their distribution distance. Let $\phat$ be an arbitrary distribution over $\cT$. Then we have 
    \begin{align}
        &\EE_{ \cT \sim \phat(\cdot)}|\EE_{a\sim \overline{\alge}(\cdot|\cT, s)} r(s,a) - \EE_{a\sim \algtheta(\cdot|\cT, s)} r(s,a)| \notag \\
        &\leq \EE_{ \cT \sim \phat(\cdot)}\disttv (\overline{\alge}(\cdot|\cT, s), \algtheta(\cdot|\cT, s))\notag \\
        & \leq \EE_{ \cT \sim \phat(\cdot)}\disth(\overline{\alge}(\cdot|\cT, s), \algtheta(\cdot|\cT, s)),\label{eq:110}
    \end{align}
    the first inequality holds due to the fact $|r| \leq 1$ and the property of TV distance, the second one holds since $\disttv \leq \disth$. Starting from \eqref{eq:110} we have     
\begin{align}
        &\EE_{ \cT \sim \phat}\disth(\overline{\alge}(\cdot|\cT, s), \algtheta(\cdot|\cT, s)) \notag \\
        &= \EE_{ \cT \sim \PP^{\Mb, \algc}_{T-1}}\disth(\overline{\alge}(\cdot|\cT, s), \algtheta(\cdot|\cT, s))\cdot \frac{\phat(\cT)}{\PP^{\Mb, \algc}_{T-1}(\cT)}\notag \\
        & \leq \sqrt{\underbrace{\EE_{ \cT \sim \PP^{\Mb, \algc}_{T-1}}\disth^2(\overline{\alge}(\cdot|\cT, s), \algtheta(\cdot|\cT, s))}_{I_1}} \cdot \sqrt{\underbrace{\EE_{ \cT \sim \PP^{\Mb, \algc}_{T-1}}\bigg(\frac{\phat(\cT)}{\PP^{\Mb, \algc}_{T-1}(\cT)}\bigg)^2}_{I_2}},\label{eq:111}
    \end{align}
    where the first inequality holds due to Cauchy-Schwarz inequality. For $I_1$, we use Lemma \ref{lemma:linpretrain}. Notice that the length of $|\cT| = T-1$ and the definition of $\epre$, we have
    \begin{align}
        I_1 \leq (\epre/C_{\decoder})^2.\label{eq:112}
    \end{align}
    For $I_2$, by the definition of $\chi^2$ distance, we have
    \begin{align}
        I_2 &= \EE_{ \cT \sim \phat}\frac{\phat(\cT)}{\PP^{\Mb, \algc}_{T-1}(\cT)} \notag \\
        &=\EE_{ \cT \sim \phat}\frac{\phat(\cT)}{\decoder_{T-1}(\cT|s)} \cdot \frac{\decoder_{T-1}(\cT|s)}{\PP^{\Mb, \algc}_{T-1}(\cT)}\notag \\ & \leq C_{\decoder}^2[1+\chi^2(\phat(\cdot), \decoder_{T-1}(\cdot|s))].\label{eq:113}
    \end{align}
    where the inequality holds due to Assumption \ref{ass:decoder}. 
    Substituting \eqref{eq:112} and \eqref{eq:113} into \eqref{eq:111}, and substituting \eqref{eq:111} into \eqref{eq:110}, we have
    \begin{align}
        &|\EE_{ \cT \sim \phat, a\sim \overline{\alge}(\cdot|\cT, s)}r(s,a) - \EE_{ \cT \sim \phat, a\sim \algtheta(\cdot|\cT, s)} r(s,a)|\notag \\
        & \leq \epre\sqrt{1+\chi^2(\phat(\cdot), \decoder_{T-1}(\cdot|s))}, \label{help:1}
    \end{align}
    holds for any $\phat \in \cP$. Finally, we have 
    \begin{align}
        &\EE_{ \cT^s \sim \ps(\cdot|\cD,s), a\sim \overline{\alge}(\cdot|\cT^s, s)}r(s,a) \notag \\
        &\geq \EE_{ \cT^s \sim \ps(\cdot|\cD,s), a\sim \algtheta(\cdot|\cT^s, s)}r(s,a) - \epre\sqrt{1+\chi^2(\ps(\cdot|\cD,s), \decoder_{T-1}(\cT|s))} \notag \\
        &\geq \EE_{ \cT^s \sim \PP^{*,s}(\cdot|\cD,s), a\sim \algtheta(\cdot|\cT^s, s)}r(s,a) - \epre\sqrt{1+\chi^2( \PP^{*,s}(\cdot|\cD,s), \decoder_{T-1}(\cT|s))},\notag\\
        & \geq \EE_{ \cT^s \sim \PP^{*,s}(\cdot|\cD,s), a\sim \overline{\alge}(\cdot|\cT^s, s)}r(s,a) - 2\epre\sqrt{1+\chi^2( \PP^{*,s}(\cdot|\cD,s), \decoder_{T-1}(\cT|s))},\notag\\
        & \geq \EE_{ \cT^s \sim \PP^{*,s}(\cdot|\cD,s), a\sim \overline{\alge}(\cdot|\cT^s, s)}r(s,a) - 2\epre C_{\decoder}\sqrt{1+\chi^2( \PP^{*,s}(\cdot|\cD,s), \PP^{\Mb, \algc}_{T-1}(\cdot))},\notag
    \end{align}
where the first inequality holds due to \eqref{help:1}, the second one holds due to the selection rule of $\ps$, the third one holds due to \eqref{help:1} and the last one holds due to Assumption \ref{ass:decoder}. This concludes our proof.

\end{proof}



\subsection{Proof of Theorem \ref{thm:2}} \label{thm:75}
\begin{proof}
Suppose we are at step $t$ and we condition on all past history $\cH_{t-1} = (s_1, a_1, r_1,\dots, s_{t-1}, a_{t-1}, r_{t-1})$. Let $\Mb_t$ be the task at $t$ step and $s_t$ be the state observed. Then with probability at least $1-1/m_c - \delta$, the following event $\cE_t$ holds:
\begin{align}
    &\EE_{\cT \sim \PP^{\Mb_t, \algc}_{t-1}}\bigg[\disth^2(\overline{\alge}(\cdot|\cT_{t-1}, s_t), \algtheta(\cdot|\cT_{t-1}, s_t))\bigg] \notag \\
    & \leq m_c\bigg[c\cdot \frac{T\log(\delta^{-1}\cN(1/(\npre T)^2)T^2)}{\npre} + T\ereal\bigg], \notag
\end{align}

Now following \eqref{eq:111} in the proof of Theorem \ref{thm:1}, we still have 
\begin{align}
    &\EE_{ \cT \sim \phat}|\EE_{a\sim \overline{\alge}(\cdot|\cT, s_t)} r(s,a) - \EE_{a\sim \algtheta(\cdot|\cT, s_t)} r(s,a)| \notag \\
    &\leq \sqrt{\underbrace{\EE_{ \cT \sim \PP^{\Mb_t, \algc}_{t-1}}\disth^2(\overline{\alge}(\cdot|\cT, s_t), \algtheta(\cdot|\cT, s_t))}_{I_1}} \cdot \sqrt{\underbrace{\EE_{ \cT \sim \PP^{\Mb_t, \algc}_{t-1}}\bigg(\frac{\phat(\cT)}{\PP^{\Mb_t, \algc}_{t-1}(\cT)}\bigg)^2}_{I_2}}
\end{align}
Then following Lemma \ref{lemma:pretrain}, under event $\cE_t$, we have
\begin{align}
I_1 \leq (\epre/C_{\decoder})^2,\  \epre/C_{\decoder} = T^2\cdot\sqrt{c\cdot \frac{T\log(\cN(1/(\npre T)^2)T^2)}{\npre} + T\ereal}.\notag
\end{align}
For $I_2$, similar to \eqref{eq:113}, we have
\begin{align}
    I_2 \leq C_{\decoder}^2[1+\chi^2(\phat(\cdot), \decoder_{t-1}(\cdot|s_t))].\notag
\end{align}
Therefore, we have for any $\phat$, 
\begin{align}
        &|\EE_{ \cT \sim \phat(\cdot|\cH_{t-1}, s_t), a\sim \overline{\alge}(\cdot|\cT, s_t)}r(s_t,a) - \EE_{ \cT \sim \phat(\cdot|\cH_{t-1}, s_t), a\sim \algtheta(\cdot|\cT, s_t)} r(s_t,a)|\notag \\
        &\leq \epre\sqrt{1+\chi^2(\phat(\cdot|\cH_{t-1}, s_t), \decoder_{t-1}(\cdot|s_t))}.\label{online:1}
    \end{align}
    Taking union bound and let $m_c = T^2, \delta = 1/T^2$, then we get $\cE_1,...,\cE_T$ hold with probability at least $1-2/T$.  Next we bound the suboptimal gap at $t$ step as follows:

\begin{align}
    &\EE_{ \cT^{t-1} \sim \pt(\cdot|\cH_{t-1}, s_t), a\sim \overline{\alge}(\cdot|\cT^{t-1}, s_t)}r(s_t,a)\notag \\
    & \leq \EE_{\substack{\cT^{t-1} \sim \pt(\cdot|\cH_{t-1}, s_t),\\ a\sim \algtheta(\cdot|\cT^{t-1}, s_t)} }r(s_t,a) + \epre\sqrt{1+\chi^2(\pt(\cdot|\cH_{t-1}, s_t), \decoder_{t-1}(\cdot|s_t))}\notag \\
    & \leq \EE_{\substack{\cT^{t-1} \sim \ptstar(\cdot|\cH_{t-1}, s_t),\\ a\sim \algtheta(\cdot|\cT^{t-1}, s_t)} }r(s_t,a) + \epre\sqrt{1+\chi^2(\ptstar(\cdot|\cH_{t-1}, s_t), \decoder_{t-1}(\cdot|s_t))}\notag \\
    & \leq \EE_{\substack{\cT^{t-1} \sim \ptstar(\cdot|\cH_{t-1}, s_t),\\ a\sim \overline{\alge}(\cdot|\cT^{t-1}, s_t)} }r(s_t,a) + 2\epre\sqrt{1+\chi^2(\ptstar(\cdot|\cH_{t-1}, s_t), \decoder_{t-1}(\cdot|s_t))}\notag \\
    & \leq \EE_{\substack{\cT^{t-1} \sim \ptstar(\cdot|\cH_{t-1}, s_t),\\ a\sim \overline{\alge}(\cdot|\cT^{t-1}, s_t)} }r(s_t,a) + 2C_{\decoder}\epre\sqrt{1+\chi^2(\ptstar(\cdot|\cH_{t-1}, s_t), \PP^{\Mb_{t}, \algc}_{t-1}(\cdot))},\label{online:2}
\end{align}
where the first inequality holds due to \eqref{online:1}, the second one holds due to the optimism principle, the third one holds due to \eqref{online:1}, and the last one holds due to Assumption \ref{ass:decoder}. Taking summation of \eqref{online:2} from $1$ to $T$ concludes our proof. 
\end{proof}

\section{More Experiment Details} \label{exp:details}

In this section, we provide additional details on our experiments in Section \ref{knnresults}. The tables included below outline the hyperparameter settings that were used throughout the evaluation process.

\begin{table}[H]
\centering
\caption{Hyperparameter settings ($k$ and $c$) for different benchmarks and model sizes.}
\begin{tabular}{c|c|c|c}
\toprule
\textbf{Benchmark} & \textbf{Alfworld} & \textbf{Webshop} & \textbf{HotPotQA} \\
\midrule
\textbf{Llama3.1 8b}  & $k=5,c=5$ & $k=5,c=0$ & $k=5,c=5$ \\
\textbf{Llama3.1 70b} & $k=5,c=5$ & $k=5,c=0$ & $k=5,c=0$ \\
\bottomrule
\end{tabular}

\label{tab:benchmark_hyperparams_updated}
\end{table}

Notably, there are a lot of burgeoning representative benchmarks and tasks across various domains, like reasoning \citep{DBLP:journals/corr/abs-2406-13975}, code generation \citep{DBLP:journals/corr/abs-2405-11430}, long context information extraction \citep{ge2024training, wang2023idea}, visual recognition \citep{wang2023parameter, kang20216, yang2022retargeting, chen2024gui}, math reasoning \citep{xiong2022self, zhao2024large}, classification \citep{wan2023text, kang2022tie, huang2024evolver, liu2023deid, tao2024harnessingllmsapiinteractions}, harm detection \citep{zhao2024biastoxicityroleplayreasoning, huang2023harnessing, fan2024towards}. These vast domain of applications holds the potential of \name{}. We leave to apply $\name$ to above benchmarks to future work.





\section{Prompt Template} \label{prompt}

Our prompt framework is designed with the considerations of simplicity, efficiency, and generalizability. Moreover, we aim to leverage the high-performing long-context capabilities of modern LLMs \citep{dubey2024llama} to a maximal degree. In one sentence to describe our prompting philosophy: previous success trajectories are seen by the agent as \emph{experiences} it has gone through in the same trial. As an example from the Alfworld experiment, a in-context demonstration is insert into the prefix of the agent's prompt as follows:
\newline
\begin{mdframed}
\begin{Verbatim}[breaklines=true,breaksymbol={},commandchars=\\\{\}]
[User] Interact with a household to solve a task. Here are a few examples.
You are in the middle of a room. Looking quickly around you, you see a cabinet 13, a cabinet 12, a cabinet 11, a cabinet 10, a cabinet 9, a cabinet 8, a cabinet 7, a cabinet 6, a cabinet 5, a cabinet 4, a cabinet 3, a cabinet 2, a cabinet 1, a coffeemachine 1, a countertop 1, a diningtable 1, a drawer 1, a fridge 1, a garbagecan 1, a microwave 1, a shelf 3, a shelf 2, a shelf 1, a sinkbasin 1, a stoveburner 4, a stoveburner 3, a stoveburner 2, a stoveburner 1, and a toaster 1.
Your task is to: put a hot apple in fridge.

[Assistant] think: To solve the task, I need to find and take an apple, then heat it with microwave, then put it in fridge.

[User] OK.

[Assistant] think: First I need to find an apple. An apple is more likely to appear in fridge (1), diningtable (1), coffeetable (1), drawer (1), cabinet (1-13), garbagecan (1). I can check one by one, starting with fridge 1.

[User] OK.

[Assistant] go to fridge 1

[User] The fridge 1 is closed.

\textbf{[experience detail omitted]}

[Assistant] go to fridge 1

[User] The fridge 1 is open. In it, you see a cup 1, and a egg 1.

[Assistant] put apple 1 in/on fridge 1

[User] You put the apple 1 in/on the fridge 1.

\textbf{[more in-context experiences]}

[User] Examples finished. Here is the task:
You are in the middle of a room. Looking quickly around you, you see a cabinet 6, a cabinet 5, a cabinet 4, a cabinet 3, a cabinet 2, a cabinet 1, a coffeemachine 1, a countertop 3, a countertop 2, a countertop 1, a drawer 3, a drawer 2, a drawer 1, a fridge 1, a garbagecan 1, a microwave 1, a shelf 3, a shelf 2, a shelf 1, a sinkbasin 1, a stoveburner 4, a stoveburner 3, a stoveburner 2, a stoveburner 1, and a toaster 1.
Your task is to: put a saltshaker in drawer.

[Assistant] think: To solve the task, I need to find and take a saltshaker, then put it in/on a drawer.

\textbf{[real trial begins]}
\end{Verbatim}
\end{mdframed}

\bibliographystyle{ims}
\bibliography{reference}

\end{document}